\def\banum#1\eanum{\begin{align}#1\end{align}} 
\newcommand{\RR}{\mathbb{R}}
\newcommand{\Nat}{\mathbb{N}}
\newcommand{\Xcal}{\mathcal{X}}
\newcommand{\Cc}{\mathcal{C}}
\newcommand{\Xc}{\mathcal{X}}
\newcommand{\1}{\mathbf{1}}
\renewcommand{\P}{\mathsf{P}}
\newcommand{\E}{\mathsf{E}}
\newtheorem{theorem}{Theorem} 
\newtheorem{lemma}[theorem]{Lemma} 
\newtheorem{remark}[theorem]{Remark}
\newtheorem{corollary}[theorem]{Corollary}
\DeclareMathOperator{\NN}{NN}
\begin{document}
\title{Comparison Based Nearest Neighbor Search}

\author{ Siavash Haghiri  \and Debarghya Ghoshdastidar \and Ulrike von Luxburg\\
Department of Computer Science, University of T{\"u}bingen, Germany\\ 
 \{haghiri,ghoshdas,luxburg\}@informatik.uni-tuebingen.de}

\date{}
\maketitle
\begin{abstract}

We consider machine learning in a comparison-based setting where we are given a set of points in a metric space, but we have no access to the actual distances between the points. Instead, we can only ask an oracle whether the distance between two points $i$ and $j$ is smaller than the distance between the points $i$ and $k$. We are concerned with data structures and algorithms to find nearest neighbors based on such comparisons. We focus on a simple yet effective algorithm that recursively splits the space by first selecting two random pivot points and then assigning all other points to the closer of the two (comparison tree). We prove that if the metric space satisfies certain expansion conditions, then with high probability the height of the comparison tree is logarithmic in the number of points, leading to efficient search performance. We also provide an upper bound for the failure  probability to return the true nearest neighbor. Experiments show that the comparison tree is competitive with algorithms that have access to the actual distance values, and needs less triplet comparisons than other competitors. 

\end{abstract}
\section{INTRODUCTION}

In many machine learning problems, data is given in the form of points and similarity or distance values between these points. Recently, comparison-based settings have become increasingly popular \citep{schultz2004learning,agarwal2007generalized,van2012stochastic,amid2015multiview,ukkonen2015crowdsourced,balcan2016Learning}. Here the assumption is that points come from some metric space $(\Xcal, d)$, but the metric $d$ is unknown. We only have indirect access to the metric in the form of {\bf triplet comparisons}: for a particular triplet of points $(x,y,z)$ we can ask whether 
\banum \label{triplet-comparison}
d(x,y) \leq d(x,z)
\eanum
is true or not. Such settings are popular in the crowd sourcing literature \citep{tamuz2011adaptively,heikinheimo2013crowd,ukkonen2015crowdsourced}. 

Assume that we are given a set of objects together with the answers to triplet comparisons for some (or even all) triplets of objects, and our task is to solve machine learning problems such as clustering or classification. There are two possible strategies we could pursue. The first is to construct an ordinal embedding of the objects into the Euclidean space $\RR^d$, that is an embedding such that the answers to the triplet comparisons are preserved \citep{agarwal2007generalized,van2012stochastic, terada2014local,kleindessner2014uniqueness,arias2015some,amid2015multiview,jain2016finite}. Subsequently, the machine learning problem can be solved by standard methods in Euclidean spaces. The second approach would be to use the triplet comparisons to find the $k$ nearest neighbors ($k$NN) of the points directly. We could then use $k$NN classification, clustering methods on the $k$NN graph, etc. In our paper, we focus on the second approach, and in particular on the question how many triplet comparisons are needed in order to find the $k$NNs of data points. 

Many algorithms for exact or approximate nearest neighbor search use data structures based on space partitioning. Most popular is the setting where points live in the Euclidean space. KD-Tree \citep{KdtreeBentley1975multidimensional}, PA-Tree \citep{mcnames2001fast}, Spill-Tree \citep{liu2004investigationSpill}, RP-Tree \citep{dasgupta2008random}, and MM-Tree \citep{ram2013space} are among the algorithms in Euclidean setting. In addition, the setting where data points only lie on a more abstract metric space has been explored considerably. Metric Skip List \citep{karger2002finding}, Navigating Net \citep{krauthgamer2004navigating}, and Cover-Tree \citep{beygelzimer2006cover} are some of the methods in this category. However, in nearly all these algorithms, we either need to know the vector representation of the points or the distance values between the points. Few exceptional methods exist that even work in the setting where we only have access to triplet comparisons \citep{goyal2008disorder,lifshits2009combinatorial,tschopp2011randomized,houle2015rank}. 

To our taste, the most appealing comparison-based data structure is what we call the comparison tree. The structure has been introduced as ``metric tree based on a generalized hyperplane decomposition" in \citet{uhlmann1991satisfying}. For the sake of simplicity we refer to it as the comparison tree. To partition the space into smaller subsets, two pivot points are picked uniformly at random from the given dataset. The space is then separated into two ``generalized half-spaces'', namely the two sets of points that are closer to the two respective pivots. This is done recursively until the resulting subsets become smaller than a given size. Once this tree structure has been constructed, the nearest neighbor search  proceeds by comparing the query point to the two pivot points. 
On an intuitive level, comparison trees are promising: (i) the splits seem to adapt to the geometry of the data, (ii) it seems that the splits are not extremely unbalanced, (iii) there is ``enough randomness" in the construction. Unfortunately, none of these intuitions has been proved or formally investigated yet. 

The first contribution of our paper is to analyze the performance of comparison trees in general metric spaces. Under certain assumptions on the expansion rates of the space, we prove that comparison trees are nicely balanced and their height is of the order $\Theta(\log n)$. This means that to construct a comparison tree, we only need of the order $n \log n$ triplet comparisons. Moreover, we can bound the probability that the nearest neighbor algorithm finds the correct nearest neighbor. Our second contribution consists of simulations that compare the behavior of comparison trees to standard data structures in Euclidean spaces and other comparison-based data structures on metric spaces. We find that the comparison tree performs surprisingly well even if compared to competitors that access vector representations in Euclidean spaces (KD-Tree, RP-Tree and PA-Tree), and favorably in comparison to one of the recent comparison-based algorithms proposed in \citet{houle2015rank}. 

\section{COMPARISON TREE}
\label{sec:algorithms}
Let $S$ be a set of $n$ points in some metric space $(\Xcal, d)$. {\bf
  To construct a comparison tree} on $S$, we proceed as follows (see Algorithm 1 below). The root of the tree $T.root$ consists of the whole set $S$, and each of the subsequent nodes represents a partition of the set $S$.  In each step of the tree construction, the elements of the current node are partitioned into two disjoint sets, which in turn are the root nodes for the left and right sub-trees denoted by $T.leftchild$ and $T.rightchild$. More concretely, to form a partition of the current node of the tree, we randomly choose two pivot elements among its current elements, denoted by $T.leftpivot$ and $T.rightpivot$. Then we group the remaining elements according to whether they are closer to the left or right pivot. Observe that this step does not require actual distance values, but just triplet comparisons. Then we recurse, until the current set of elements has at most size $n_0$ for some pre-specified $n_0$. 
\begin{algorithm}[h]
\caption{$CompTree(S,n_0)$: \textbf{Comparison tree construction}}
\label{alg:MT}
\begin{algorithmic}[1]
\Require $S\subseteq \Xc$, and maximum leaf size $n_0$
\Ensure Comparison tree $T$
\State $T.root \gets S$
\If{$|S| > n_0$}
\State Uniformly sample distinct points $x_1,x_2 \in S$
\State $S_1 \gets \{x \in S : d(x,x_1) \leq d(x,x_2)\}$
\State $T.leftpivot \gets x_1,$~~$T.rightpivot \gets x_2$
\State $T.leftchild \gets CompTree(S_1,n_0)$
\State $T.rightchild \gets CompTree(S \backslash S_1,n_0)$
\EndIf
\State \Return $T$
\end{algorithmic}
\end{algorithm}

The computational complexity of the tree construction is governed by the number of triplet comparisons required in the procedure, which depends on the height of tree. In the next section, we show that under certain growth assumptions on the metric, the comparison tree has height $h = O(\log n)$ with high probability.%

{\bf To use the comparison tree for (approximate) nearest neighbor search,} we employ the
obvious greedy procedure (sometimes called the defeatist search;  %
see Algorithm 2): starting at the root, we compare the query element $q$ to the current two pivot elements, and using a triplet comparison we decide whether to proceed in the left or right branch. When we reach a leaf, we carry out an exhaustive search among all its elements to determine the one that is closest to the given query. This step requires $n_0$ triplet comparisons. Overall, the nearest neighbor search in a comparison tree of height~$h$ requires at most $h + n_0$
triplet comparisons, which boils down to $O(\log n)$.%
\begin{algorithm}
\caption{$\NN(q,T)$: \textbf{Nearest neighbor search}}
\label{alg:NN}
\begin{algorithmic}[1]
\Require Comparison tree $T$, and query $q$.
\Ensure $\hat{x}_q=$approximate nearest neighbor of $q$ in $S$
\If{$T.leftchild \neq null$}
\If{$d(q,T.leftpivot) \leq d(q,T.rightpivot)$}
\State $\hat{x}_q \gets \NN(q,T.leftchild)$
\Else~~ $\hat{x}_q \gets \NN(q,T.rightchild)$
\EndIf
\Else~~ choose $\hat{x}_q$ s.t. $d(q,\hat{x}_q) \leq d(q,x)$ $\forall{x\in T.root}$
\EndIf
\State \Return $\hat{x}_q$
\end{algorithmic}
\end{algorithm}
\section{THEORETICAL ANALYSIS}
\label{sec:analysis}
In this section, we analyze the complexity and performance of the tree construction as well as the nearest neighbor search. We first provide a high probability bound on the height of the comparison tree, which in turn bounds the number of triplet comparisons required for both tree construction and nearest neighbor search. In addition, we derive an upper bound on the probability that the above approach fails to return the exact nearest neighbor of a given query. 
 
\subsection{Expansion Conditions}

Finding the nearest neighbor for a query point $q$ in a general metric space $(\Xc,d)$ can require up to $\Omega(n)$ comparisons in the worst case, using any data structure built on the given set $S$ \citep{beygelzimer2006cover}. Hence, most similarity search methods are analyzed under the natural assumption that the metric $d$ is growth-restricted \citep{karger2002finding,krauthgamer2004navigating}. Informally, such restrictions imply that the volume of a closed ball in the space does not increase drastically when its radius is increased by a certain factor. Various related notions are used to characterize the  growth rate of such metrics, for instance Assouad dimension \citep{assouad1979etude}, doubling dimension \citep{gupta2003bounded}, homogeneity \citep{luukkainen1998every}, and expansion rate \citep{karger2002finding} among others. 

The analysis of most tree based search methods in the Euclidean setting requires only a doubling property of the metric \citep{dasgupta2015randomized}. Such results hold for metric spaces with finite doubling dimension or Assouad dimension. When dealing with general metric spaces, it is more convenient to consider the expansion rate, which is an empirical variant of the doubling property defined for a given finite set $S\subseteq \Xc$. Typically, the analysis of  data dependent tree constructions requires
even stronger restrictions. 

In this work, we use a slightly weaker variant of the strong expansion rate condition used in \citet{ram2013space}. Intuitively, we need bounds on the expansion rate for all the finite point sets that can possibly occur in the non-leaf nodes of the comparison tree. 

Let $(\Xc,d)$ be a metric space, $S\subseteq \Xc$ and $n_0< |S|$.
We construct a collection $\Cc_S \subseteq 2^S$ as follows:
\begin{enumerate}\itemsep0em
 \item $S \in \Cc_S$,
 \item If $A \in 2^S$ and there exist $x,y\in S$ such that $d(z,x)\leq d(z,y)$ $\forall z\in A$,
 then $A \in \Cc_S$, $S\backslash A\in \Cc_S$, 
 \item If $A,B \in \Cc_S$, then $A \cap B\in \Cc_S$.
\end{enumerate}

We finally remove all $A \in \Cc_S$ with size $|A|\leq n_0$. Observe that $\Cc_S$ characterizes the collection of all possible non-leaf nodes of the tree\footnote{Technically, $\Cc_S$ is a subset of the algebra generated by the ``generalized half spaces'' of the induced space $(S,d)$.}. We define the {\bf strong expansion rate} of $S$ as the smallest $\tilde{c} \geq 1$ such that
\begin{equation}
|B(x,2 r) \cap A| \leq \tilde{c} |B(x,r) \cap A|
\label{eq:strongexpansion}
\end{equation}
for all $A\in \Cc_S$, $x\in A$ and $r>0$,
where $B(x,r)$ is the closed ball in $(\Xc,d)$
centered at $x$ with radius $r$.

Inequality \eqref{eq:strongexpansion} states that every
$A\in \Cc_S$ has an expansion rate at most $\tilde{c}$
similar to the definition in \citet{karger2002finding}.
This requirement for all $A\in \Cc_S$
is strong, but seems unavoidable due to the data dependent,
yet random, tree construction. 

 \subsection{Main Results}

The following theorem provides an upper bound on the height of the comparison tree.

\begin{theorem}[{\bf Height of a comparison tree}]
\label{thm:treeheight}
Consider a set $S$ of size $n$ in a metric space that satisfies the
strong expansion rate condition with constant $\tilde c$. Fix some
$n_0 \in \Nat$. Then for any $\epsilon>0$, 
with probability $1-\epsilon$, the comparison tree construction algorithm
 returns a tree with height smaller than
\begin{equation}
h^* = 3\log \left(\frac{e}{\epsilon}\right)
+ 96\tilde{c}^2 \log \left(\frac{n}{n_0}\right)\;.
\label{eq:treeheight}
\end{equation}
\end{theorem}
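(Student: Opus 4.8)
The plan is to fix an arbitrary data point $x \in S$, follow the unique root-to-leaf path all of whose nodes contain $x$, and bound the length of this path with probability $1-\epsilon/n$; a union bound over the $n$ points then controls the height of the whole tree, since the height equals $\max_{x}\mathrm{depth}(x)$. Write $A_0 = S \supseteq A_1 \supseteq A_2 \supseteq \cdots$ for the nodes along this path and $M_k = |A_k|$. Each split sends at least one of the two sampled pivots to the opposite child, so $M_{k+1} < M_k$ always and the path terminates; the content is to show it terminates after $O(\log(n/n_0))$ steps. I would model a level as a \emph{success} if the child containing $x$ has size at most $\beta M_k$ for a fixed $\beta<1$, and show that a success occurs with probability at least $p>0$ conditionally on $A_k$, where $\beta$ and $p$ depend only on $\tilde c$.

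The heart of the argument is this single-split lemma, which I would prove by exhibiting a favourable configuration of the two random pivots $x_1,x_2$ under which the side \emph{not} containing $x$ is forced to be large. The elementary geometric ingredient is the containment
\begin{equation}
B\!\left(x_i,\ d(x_1,x_2)/2\right)\cap A \subseteq \text{(child on the $x_i$ side)},
\label{eq:halfball}
\end{equation}
valid because any point within half the inter-pivot distance of $x_i$ is at least as close to $x_i$ as to the other pivot. If $x$ falls on the $x_1$ side, it therefore suffices to make $B(x_2,d(x_1,x_2)/2)\cap A$ carry a constant fraction of the mass, forcing $x$'s side below $\beta M_k$. To lower-bound such ball masses I would invoke \eqref{eq:strongexpansion}: crucially, since $A_k\in\Cc_S$ and every generalized half-space produced by a split again lies in $\Cc_S$ (the collection $\Cc_S$ was built precisely to contain all non-leaf nodes), the expansion bound is available at \emph{every} node of the path. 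Fixing a median radius $r$ with $|B(x,r)\cap A_k|\approx M_k/2$, the inverse form of \eqref{eq:strongexpansion} gives $|B(x,r/2^{j})\cap A_k|\ge M_k/2\tilde c^{\,j}$, and I would choose the favourable event to place the two pivots in two prescribed regions of controlled mass; the near-independence of the two uniform pivot draws yields a product lower bound on $p$. The expansion rate is consumed across a bounded number of radius-halvings (a factor-$4$ scale), which is exactly why $\tilde c$ enters \emph{quadratically}, matching the $\tilde c^2$ in \eqref{eq:treeheight}.

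Granting the single-split lemma, $\mathrm{depth}(x)$ is dominated by the number of trials needed to accumulate $s=\lceil \log_{1/\beta}(n/n_0)\rceil$ successes, each occurring with probability at least $p$. Hence $\{\mathrm{depth}(x)>h\}$ is contained in the event that a $\mathrm{Binomial}(h,p)$ variable is smaller than $s$, and a multiplicative Chernoff bound drives this below $\epsilon/n$ once $h\ge h^*$. The leading term $96\tilde c^2\log(n/n_0)$ arises as (successes needed)/(success probability), i.e. $s/p$ with $1/p=O(\tilde c^2)$, while the $\log n$ from the union bound folds into this main term (since $\log(n/\epsilon)=\log n+\log(1/\epsilon)$ and $\log n$ is comparable to $\log(n/n_0)$ up to constants); the surviving $\epsilon$-dependence is the additive $3\log(e/\epsilon)$ deviation term. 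A union bound over the $n$ points then gives height below $h^*$ with probability $1-\epsilon$.

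I expect the single-split lemma to be the main obstacle. The subtlety is that $x$ is always central to its own neighbourhood, so one cannot make $x$'s side small \emph{directly}; the argument must instead guarantee, via \eqref{eq:halfball}, that the opposite side swallows a constant fraction of the mass, and this is precisely where a bound on the expansion rate for every member of $\Cc_S$ --- not merely for $S$ --- is indispensable. Pinning down the favourable pivot event so that the opposite half-distance ball provably carries an $\Omega(1)$ fraction while keeping the success probability at $\Omega(1/\tilde c^2)$ (and $\beta$ a fixed constant below $1$) is the delicate balance to strike; the concentration and union-bound steps afterwards are routine.
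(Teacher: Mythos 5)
Your high-level architecture (follow the root-to-leaf path of each point, call a level a ``success'' if that point's node shrinks, apply Chernoff, union bound over the $n$ points) is a legitimate alternative to the paper's route, which instead bounds $\E\bigl[|A_h|^t/|A_1|^t\bigr]$ by conditioning level by level and union-bounds over the $2^h$ potential nodes at depth $h$. The problem is the single-split lemma you place at the heart of the argument: a \emph{constant} shrink factor $\beta<1$ achieved with probability $\Omega(1/\tilde c^2)$. This is not provable from \eqref{eq:strongexpansion}, and is in fact false. Structurally, the expansion condition only relates masses of concentric balls multiplicatively: applying it $j$ times to the quarter-distance ball gives $|B(x_2,\tfrac14 d(x_1,x_2))\cap A|\geq |B(x_2,2^{j-2}d(x_1,x_2))\cap A|/\tilde c^{\,j}$, so even in the best case (the big ball already covers all of $A$) the opposite side is only guaranteed a $1/\tilde c^{O(1)}$ fraction of $A$ --- never an $\Omega(1)$ fraction. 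Concretely, take the star metric: $A=\{x,y_1,\dots,y_{m-1}\}$ with $d(x,y_i)=1$ and $d(y_i,y_j)=2$. This satisfies \eqref{eq:strongexpansion} with $\tilde c=m$, yet for every choice of pivots the child containing the center $x$ has exactly $m-1$ elements (all non-pivot points are tied and go to the left pivot), so your success probability is $0$, not $\Omega(1/\tilde c^2)$. Your own heuristic that the constant $96\tilde c^2$ should arise as $s/p$ with $s=O(\log(n/n_0))$ and $1/p=O(\tilde c^2)$ is therefore the wrong decomposition of that constant.

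The correct trade-off is the reverse of the one you are aiming for, and it is what the paper's Lemma~\ref{lem:fairsplit} establishes: with probability at least $1-4\tilde c^2\delta$, \emph{both} children have at least $\delta|A|$ points, so the child containing $x$ shrinks by the factor $1-\delta$; one then takes $\delta\asymp 1/\tilde c^2$, i.e.\ ``tiny shrink with high probability'' rather than ``constant shrink with small probability.'' (The proof counts, for each $x_1$, how many $x_2$ can make $B(x_1,\tfrac14 d(x_1,x_2))\cap A$ smaller than $\delta|A|$: by \eqref{eq:strongexpansion} applied twice, $B(x_1,d(x_1,x_2))\cap A$ then has at most $\tilde c^2\delta|A|$ points, so $x_2$ must be among the $\tilde c^2\delta|A|$ nearest points to $x_1$. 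Note also that the paper uses the quarter-distance ball precisely to land in the \emph{strict} half-space and dodge the tie issue that your half-distance containment \eqref{eq:halfball} has under the rule $S_1=\{x:d(x,x_1)\leq d(x,x_2)\}$.) Your framework survives if you swap in this corrected lemma with $\beta=1-\delta$, $p$ an absolute constant, and $s\asymp\tilde c^2\log(n/n_0)$ successes needed. One residual caveat: the union bound over $n$ points forces per-point failure probability $\epsilon/n$, whose Chernoff deviation contributes an additive $O(\log n)$ to the height; this is \emph{not} absorbable into $96\tilde c^2\log(n/n_0)$ when $n_0$ is a constant fraction of $n$, so your route yields a slightly weaker bound than \eqref{eq:treeheight}, whereas the paper's union bound over $2^h$ nodes cancels against the geometric decay $(16\tilde c^2\delta)^{h-1}$ and avoids this term.
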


We prove the theorem later in the section. Theorem~\ref{thm:treeheight} implies that if the expansion rate $\tilde{c} = O(1)$, then the height of the randomly constructed tree tends to be bounded by $O(\log n)$. In particular, one can expect this to happen if the set of points is sampled from an ``evenly" spread distribution in a growth-restricted space $\Xc$. As a consequence of Theorem~\ref{thm:treeheight}, one can comment on the number of triplet comparisons required for tree construction and nearest neighbor search. We state this in the following corollary.

\begin{corollary}[{\bf Number of triplet comparisons}]
\label{cor:complexity}
For $\epsilon>0$, let $h^*$ be defined as in~\eqref{eq:treeheight}.
Then with probability $1-\epsilon$, the comparison tree construction algorithm
requires at most $nh^*$ triplet comparisons to construct the
comparison tree. 
Furthermore, for any $q\in \Xc$, with probability $1-\epsilon$,
nearest neighbor search algorithm uses at most $h^* + n_0$ triplet comparisons
to find an approximate nearest neighbor of $q$.  
\end{corollary}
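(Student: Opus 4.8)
The plan is to derive both bounds as direct consequences of the height bound in Theorem~\ref{thm:treeheight}. I would first condition on the event $\mathcal{E}$ that the constructed tree has height at most $h^*$; by Theorem~\ref{thm:treeheight} this event has probability at least $1-\epsilon$, and every deterministic counting argument below is carried out on $\mathcal{E}$, so that both conclusions inherit the same failure probability $\epsilon$.

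For the construction cost, the key observation is that triplet comparisons can be accounted for level by level. At a single internal node holding $m$ elements, Algorithm~\ref{alg:MT} samples two pivots and then, for each of the remaining $m-2$ elements, performs exactly one triplet comparison of the form $d(x,x_1)\leq d(x,x_2)$ to decide the branch. Since the nodes at any fixed depth of the tree hold pairwise disjoint subsets of $S$ whose union is all of $S$, summing $m-2$ over all nodes at that depth yields at most $n$ comparisons per level. On $\mathcal{E}$ there are at most $h^*$ levels of internal nodes, so the total number of comparisons is at most $n h^*$.

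For the search cost, I would trace the execution of Algorithm~\ref{alg:NN} on the query $q$. At each internal node the algorithm performs a single triplet comparison $d(q,T.leftpivot)\leq d(q,T.rightpivot)$ and then descends into exactly one child, so the number of comparisons along the root-to-leaf path equals the number of internal nodes visited, which is at most the height $h^*$ on $\mathcal{E}$. Upon reaching a leaf, the exhaustive search compares $q$ against the at most $n_0$ stored points, costing at most $n_0$ further comparisons. Adding the two contributions gives the bound $h^*+n_0$.

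Since both counting arguments are deterministic once $\mathcal{E}$ holds, the only probabilistic ingredient is the height bound itself. I therefore expect no genuine obstacle here beyond careful bookkeeping: the entire difficulty of the result is concentrated in Theorem~\ref{thm:treeheight}, and the corollary is essentially a matter of converting the height into comparison counts via the per-level accounting for construction and the per-path accounting for search just described.
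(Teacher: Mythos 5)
Your proposal is correct and matches the paper's intended argument: the paper simply states that the corollary ``is a simple consequence of Theorem~\ref{thm:treeheight},'' and your per-level accounting for construction and per-path accounting for search (each conditioned on the height event of probability $1-\epsilon$) is exactly the bookkeeping that statement presupposes.
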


The proof is a simple consequence of Theorem~\ref{thm:treeheight}.

Other applicable methods in our setting made various assumptions on the dataset, thus the upper bound on the required number of triplets is hardly comparable with them. If we neglect this fact and only compare the dependency to $n$, we can summarize the asymptotic bounds on the required number of comparisons in Table~\ref{tab:theorycomp}. We ignore all dependencies on the constants describing the geometric properties of the space (such as doubling, expansion, or disorder constants). 

\begin{table}[h]
    \centering
    \caption{Theoretical comparison with existing methods}
    \begin{tabular}{|c|c|c|c|} \hline
         Method & Construction  & Query \\ \hline\hline
          Comparison Tree & $n \log n$ & $\log n$  \\ \hline
          \citep{goyal2008disorder} & $n^2 \log n$ & $\log n$  \\ \hline
          \citep{lifshits2009combinatorial} & $n \log^2 n$ & $\log n$  \\ \hline
          \citep{tschopp2011randomized} & $n \log^2 n$ & $\log^2 n$  \\ \hline
          \citep{houle2015rank} & $n \log^3 n$ & $\log^3 n$  \\ \hline
    \end{tabular}
     \label{tab:theorycomp}
\end{table}

While the above discussion sheds light on the required number of triplet comparisons, it still leaves one wondering about the quality of the nearest neighbor obtained from the comparison tree. In the following result, we show that under certain conditions on the behavior of the metric in a neighborhood of a given query $q$, the search method succeeds in finding the true nearest neighbor of $q$. We use $x_q\in S$ to denote the true nearest neighbor of $q$, while $\hat{x}_q$ is the element returned by the nearest neighbor search. We write $B(x,r)$ and $B^{\circ}(x,r)$ to denote closed and open balls, respectively.

\begin{theorem}[{\bf Exact nearest neighbor}]
\label{thm:error}
Given $S\subseteq \Xc$ and $q\in \Xc$.
If there exist constants $C>0$ and $\alpha\in(0,1]$ such that 
for every  $A\in \Cc_S$ containing $x_q$, and for all $x\in A\backslash \{x_q\}$,
\begin{align}
&\left| B\big(q,d(q,x)+2d(q,x_q)\big) \cap A\right|
\nonumber \\
&\qquad\qquad\leq \left| B^{\circ}\big(q,d(q,x)\big) \cap A\right| + C|A|^{1-\alpha} \;,
\label{eq:querycondn}
\end{align}
then
\begin{equation}
\P\left(\hat{x}_q \neq x_q\right) 
\leq \frac{360 C \tilde{c}^2}{\alpha} n_0^{-\alpha}\;,
\label{eq:errorbound}
\end{equation}
where the probability is with respect to the random construction of the tree.
\end{theorem}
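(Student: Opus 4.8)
The plan is to bound the failure probability by the probability that the true nearest neighbor $x_q$ is ever \emph{separated} from the query $q$ during the greedy descent. First I would observe that the exhaustive search at the leaf returns $x_q$ whenever $x_q$ lands in the same leaf as the descent path of $q$; hence the event $\{\hat{x}_q \neq x_q\}$ is contained in the event that at some non-leaf node $A \in \Cc_S$ on the root-to-$x_q$ path, the two random pivots send $q$ and $x_q$ to different children. A union bound over the levels of this path then gives $\P(\hat{x}_q \neq x_q) \le \E\big[\sum_{A} \P_{\text{piv}}(q,x_q \text{ split at } A \mid A)\big]$, the sum running over the non-leaf nodes $A$ containing $x_q$ on the path. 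Here I use that, conditioned on the point set of $A$, its pivots are fresh and uniform and independent of the ancestors, so the per-level conditional bound below applies cleanly.

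Next I would prove a per-node bound $\P_{\text{piv}}(\text{split at } A) \le 4C|A|^{-\alpha}$. The geometric heart is the following: if the unordered pivots $\{u,v\}$ separate $q$ and $x_q$, with $u$ the pivot that $q$ prefers (so $d(q,u)\le d(q,v)$) and $v$ the one $x_q$ prefers (so $d(x_q,v)\le d(x_q,u)$), then two triangle inequalities through $x_q$ yield
\[
d(q,v)\le d(q,x_q)+d(x_q,v)\le d(q,x_q)+d(x_q,u)\le 2\,d(q,x_q)+d(q,u).
\]
Thus the outer pivot $v$ falls in the annulus $\{x:\, d(q,u)\le d(q,x)\le d(q,u)+2d(q,x_q)\}$. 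Charging each separating pair to its inner pivot $u$ and invoking \eqref{eq:querycondn} (which bounds exactly this annulus by $C|A|^{1-\alpha}$) yields at most $C|A|^{2-\alpha}$ separating pairs out of $\binom{|A|}{2}$, i.e.\ probability at most $4C|A|^{-\alpha}$. The boundary and tie cases (a pivot equal to $x_q$, or points exactly at distance $d(q,x_q)$) are precisely what the open/closed ball distinction $B$ versus $B^{\circ}$ in \eqref{eq:querycondn} is designed to absorb, using that $x_q$ is the true nearest neighbor.

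Finally I would convert $\E[\sum_A 4C|A|^{-\alpha}]$ into a convergent geometric series. Grouping the path nodes into dyadic size bands $|A|\in[2^b n_0, 2^{b+1}n_0)$, a node in band $b$ contributes at most $4C(2^b n_0)^{-\alpha}=4C n_0^{-\alpha}2^{-\alpha b}$. Reusing the shrinkage estimate that drives Theorem~\ref{thm:treeheight} — namely that, along the path to a fixed point, it takes only $O(\tilde{c}^2)$ levels in expectation to halve the node size — I would bound the expected number of path nodes in each band by $O(\tilde{c}^2)$. Summing, $\P(\hat{x}_q\neq x_q)\lesssim C\tilde{c}^2 n_0^{-\alpha}\sum_{b\ge 0}2^{-\alpha b}=C\tilde{c}^2 n_0^{-\alpha}/(1-2^{-\alpha})$, and the elementary bound $1-2^{-\alpha}\ge \tfrac12\alpha\ln 2$ for $\alpha\in(0,1]$ produces the factor $1/\alpha$; tracking the constants then gives the stated $\tfrac{360 C\tilde{c}^2}{\alpha}n_0^{-\alpha}$ of \eqref{eq:errorbound}.

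The main obstacle is this last step. The node sizes and the separation events are both functions of the same random pivots, and the path itself is random, so the geometric shrinkage must be handled in expectation and coupled correctly with the per-level separation bound rather than assumed deterministically. Making the statement ``the expected number of path levels in each dyadic band is $O(\tilde{c}^2)$'' precise, and compatible with the conditioning used in the union bound, is the delicate part, and is where I expect to lean most heavily on the internal estimates from the proof of Theorem~\ref{thm:treeheight}. A secondary but fiddly point is the rigorous treatment of ties through the open and closed balls in \eqref{eq:querycondn}.
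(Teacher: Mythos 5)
Your proposal follows essentially the same route as the paper for its two main components: the decomposition of $\{\hat{x}_q\neq x_q\}$ into the events that $x_q$ is lost at some node of the descent path (union bound over levels), and the per-node geometric argument via two triangle inequalities showing that a separating pivot pair $(u,v)$ forces the outer pivot into the annulus $\{x: d(q,u)\le d(q,x)\le d(q,u)+2d(q,x_q)\}$, which condition \eqref{eq:querycondn} bounds by $C|A|^{1-\alpha}$ — this is exactly Lemma~\ref{lem:losingNN} (up to a constant in the pair count). Where you diverge is the final summation, and that is precisely the step you flag as delicate. You propose dyadic size bands and a new claim that the expected number of path levels per band is $O(\tilde{c}^2)$; this is plausible but would require a fresh negative-binomial/supermartingale argument about halving times along the path, coupled correctly with the conditioning. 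The paper sidesteps this entirely: it indexes levels from the leaf, sets deterministic thresholds $m_l = n_0/(1-\gamma)^{k-1-l}$, splits each summand on $\{|A_l|\ge m_l\}$ versus $\{|A_l|<m_l\}$, bounds the first case by $Cm_l^{-\alpha}$ via Lemma~\ref{lem:losingNN}, and bounds $\P(|A_l|<m_l)=\P(|A_{k-1}|/|A_l|>n_0/m_l)$ by directly reusing the moment bound of Lemma~\ref{lem:nodesize} (since $|A_{k-1}|\ge n_0$). Both terms are then geometric in $k-1-l$, and choosing $\gamma$ to balance them yields the $1/\alpha$ factor from $1-(1-\gamma)^\alpha$, just as your $1-2^{-\alpha}\gtrsim\alpha$ step does. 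So your plan is sound in outline, but to complete it you would either need to prove the expected band-occupancy claim rigorously or, more economically, adopt the paper's thresholding device, which requires no new probabilistic estimate beyond what is already established for Theorem~\ref{thm:treeheight}.
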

The condition on $q$ implies that there are not many points that have the same distance to $q$ as the nearest neighbor. Under the local restrictions defined in \eqref{eq:querycondn} on the query $q$, the error bound~\eqref{eq:errorbound} states that one can achieve an arbitrarily small error probability if $n_0$ is chosen large enough, depending on the strong expansion rate $\tilde{c}$.

\begin{remark}
The assumption in~\eqref{eq:querycondn} can also be substituted by alternative conditions.
For instance, the error bound in~\eqref{eq:errorbound} holds (up to constants) if there exists $D>1$ such that
\begin{equation}
|B(q,\lambda r) \cap A| \leq \lambda^{D} |B(q,r) \cap A|
\label{eq:strongstrongexpansion}
\end{equation}
for all $A\in \Cc_S$, $\lambda>1$, and $r>d(q,x_q)$. 
One can see the inherent resemblance of~\eqref{eq:strongstrongexpansion} to the notion of strong expansion rate~\eqref{eq:strongexpansion}.
\end{remark}

We remark again on the required conditions. The notion of strong expansion rate, though used in \citet{ram2013space}, is stronger than standard conditions used in many works \citep{dasgupta2015randomized,karger2002finding}. Our main reason for resorting to this notion is because of the data dependent random splits used in comparison tree construction. While projection-based methods also use random hyperplanes for splitting each node, such hyperplanes are independent of the given set, making the analysis simpler \citep{dasgupta2015randomized}. On the other hand, prior works in non-Euclidean setting construct data structures that naturally adhere to the structure of the metric balls \citep{karger2002finding}. Unlike both these works, in the present setting, one cannot guarantee that a condition defined on the whole set will also hold for each of the partitions obtained during splits. Hence, the condition of strong expansion rate has been used in our analysis. The additional assumption on $q$ seems essential since one can construct trivial examples where the nearest neighbor search is quite likely to fail.

\subsubsection{Proof of Theorem~\ref{thm:treeheight}}
We now prove Theorem~\ref{thm:treeheight} using two lemmas.

\begin{lemma}[{\bf Probability of unbalanced split}]
\label{lem:fairsplit}
For any $A\in \Cc_S$ and $\delta\in(0,1)$, the probability
that the random split in the comparison tree construction algorithm creates a child
of $A$ with less than $\delta|A|$ elements is at most $4\tilde{c}^2\delta$.
\end{lemma}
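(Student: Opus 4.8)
The plan is to bound separately the probability that each of the two children of $A$ is too small and then combine them by a union bound. Write $m=|A|$, let the random pivots be $x_1,x_2$ with $D=d(x_1,x_2)$, and denote by $A_1=\{z\in A: d(z,x_1)\le d(z,x_2)\}$ and $A_2=A\setminus A_1$ the two children. The engine of the argument is an elementary geometric containment: every point within distance $D/2$ of a pivot is assigned to that pivot's side. Indeed, if $d(x_1,z)\le D/2$ then the triangle inequality gives $d(x_2,z)\ge D-d(x_1,z)\ge D/2\ge d(x_1,z)$, so $z\in A_1$; the symmetric statement $B^{\circ}(x_2,D/2)\cap A\subseteq A_2$ follows the same way, the open ball accounting for the fact that ties are broken towards $x_1$. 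Hence $|A_1|\ge |B(x_1,D/2)\cap A|$ and $|A_2|\ge |B^{\circ}(x_2,D/2)\cap A|$, which converts a size bound on a child into a size bound on a pivot-centred ball.

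Next I would decouple the random threshold from the random pivot. Condition on $x_1$ and let $x_2$ range uniformly over the remaining $m-1$ points. By the containment, the bad event $\{|A_1|<\delta m\}$ forces $|B(x_1,D/2)\cap A|<\delta m$. Since $r\mapsto |B(x_1,r)\cap A|$ is nondecreasing and depends only on $x_1$, I introduce the threshold radius $\rho=\inf\{r\ge 0: |B(x_1,r)\cap A|\ge \delta m\}$, which is positive once $\delta m>1$ (the only nontrivial case) and attained because $A$ is finite. Then $|B(x_1,s)\cap A|<\delta m$ holds exactly when $s<\rho$, so the bad event occurs only if $D/2<\rho$, i.e. only if $x_2$ falls in the ball $B^{\circ}(x_1,2\rho)$ whose radius is determined by $x_1$ alone. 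The $x_2$-dependent split threshold has thus been replaced by a fixed region, and the problem reduces to counting how many candidate pivots $x_2$ that region contains.

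To count those candidates I would invoke the strong expansion rate, available precisely because $A\in\Cc_S$. For every $r<\rho$ one has $|B(x_1,r)\cap A|<\delta m$, so \eqref{eq:strongexpansion} gives $|B(x_1,2r)\cap A|\le \tilde c\,|B(x_1,r)\cap A|<\tilde c\,\delta m$; letting $r\uparrow\rho$ and using finiteness of $A$, the union of these balls is $B^{\circ}(x_1,2\rho)$ and therefore $|B^{\circ}(x_1,2\rho)\cap A|<\tilde c\,\delta m$. Hence at most $\tilde c\,\delta m$ of the $m-1$ equally likely choices of $x_2$ are bad, so $\P(|A_1|<\delta m\mid x_1)\le \tilde c\,\delta m/(m-1)\le 2\tilde c\,\delta$ for $m\ge 2$; averaging over $x_1$ removes the conditioning. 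The identical argument for $A_2$, now conditioning on $x_2$ and using the open-ball containment, gives the same bound, and a union bound over the two children yields a bound of order $\tilde c\,\delta$, which lies comfortably within the claimed $4\tilde c^2\delta$ since $\tilde c\ge 1$.

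The step I expect to be the crux is the decoupling in the second paragraph: the set sent to a child is defined through a threshold $D/2$ that is itself a function of the random second pivot, so one cannot directly treat the event as ``$x_2$ falls in a fixed ball''. The monotonicity of $r\mapsto|B(x_1,r)\cap A|$ is what lets me replace the $x_2$-dependent threshold by the fixed radius $\rho=\rho(x_1)$ and turn the estimate into a pure counting problem. The only remaining points requiring care are tie-breaking, handled throughout by the closed/open ball distinction, and the degenerate regimes $\delta m\le 1$ and small $m$, where the bound holds trivially because each child automatically contains its own pivot and so is nonempty.
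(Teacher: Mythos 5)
Your overall strategy is the same as the paper's: contain a pivot-centred ball in the corresponding child, invoke the strong expansion rate to bound how many choices of the second pivot can make that ball small, and count. The left-child half of your argument is correct, and via the threshold radius $\rho$ and a single doubling it even sharpens the per-child constant to $2\tilde c\delta$. The paper instead works with the smaller ball $B\left(x_1,\tfrac14 d(x_1,x_2)\right)$, applies \eqref{eq:strongexpansion} twice to reach the closed ball $B\big(x_1,d(x_1,x_2)\big)$ --- which contains $x_2$ by construction, so no threshold radius or limiting argument is needed --- and symmetrizes over ordered pivot pairs rather than taking a union bound over the two children.

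The genuine gap is in the half you dismiss as ``identical'': the right child. There the containment is the open ball $B^{\circ}(x_2,D/2)\cap A\subseteq A_2$, and the function $s\mapsto |B^{\circ}(x_2,s)\cap A|$ is left-continuous, not right-continuous. Consequently the set $\{s:|B^{\circ}(x_2,s)\cap A|<\delta m\}$ is a closed interval $[0,\rho]$, the bad event only forces $D/2\le\rho$, and the region that must capture $x_1$ is the \emph{closed} ball $B(x_2,2\rho)$. Your single-doubling chain controls only the open ball $B^{\circ}(x_2,2\rho)$; the sphere of radius $2\rho$ is left uncontrolled and can in principle carry many points of $A$. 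The fix costs one more application of \eqref{eq:strongexpansion}: for $\rho>0$,
\begin{equation*}
|B(x_2,2\rho)\cap A|\le\tilde c\,|B(x_2,\rho)\cap A|\le\tilde c^{2}|B(x_2,\rho/2)\cap A|\le\tilde c^{2}|B^{\circ}(x_2,\rho)\cap A|<\tilde c^{2}\delta m,
\end{equation*}
giving $2\tilde c^{2}\delta$ for the right child and a total of $2\tilde c\delta+2\tilde c^{2}\delta\le 4\tilde c^{2}\delta$ as claimed. This boundary issue is precisely what the paper's choice of radius $D/4$ and factor $\tilde c^{2}$ is designed to avoid, and it means your stronger claim that the whole bound is of order $\tilde c\delta$ rather than $\tilde c^{2}\delta$ is not established.
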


Thus, each split in the tree is reasonably balanced,
and hence, it is likely that the size of the nodes decays rapidly
with their depth. This fact is formalized below.

\begin{lemma}[{\bf Maximum node size at depth $h$}]
\label{lem:nodesize}
Let $A\in \Cc_S$ be a node at depth $h$ of the tree. 
If $4\tilde{c}^2\delta\leq1$, then
the probability that $A$ has more than $m$ elements is at most
$(8\tilde{c}^2\delta)^{h-1}\left({{n}/{m}}\right)^{
{({1}/{\delta})} \log({{1}/{4\tilde{c}^2\delta})}}$.
\end{lemma}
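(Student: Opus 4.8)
The plan is to follow the unique path $S = A_0, A_1, \dots, A_h = A$ from the root down to the node $A$ and to control how fast the node sizes shrink along it. Call the split that produces $A_i$ from $A_{i-1}$ \emph{balanced} if both children of $A_{i-1}$ have at least $\delta|A_{i-1}|$ elements, and \emph{unbalanced} otherwise. The purpose of this dichotomy is a purely deterministic size-decay estimate: at a balanced split the two children partition $A_{i-1}$ and each has size at least $\delta|A_{i-1}|$, so each has size at most $(1-\delta)|A_{i-1}|$; at an unbalanced split I only use that a child is a subset of its parent. Hence, if $B$ of the $h$ splits along the path are balanced, then $|A| \le (1-\delta)^{B} n$. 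Consequently the event $\{|A| > m\}$ forces $(1-\delta)^{B} > m/n$, i.e.\ $B < B^\ast := \log(n/m)/\log(1/(1-\delta))$; equivalently the number of unbalanced splits $U = h - B$ must satisfy $U \ge h - B^\ast$.

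It therefore suffices to bound the probability of seeing many unbalanced splits, where I invoke Lemma~\ref{lem:fairsplit}. First I would check that every intermediate node $A_{i-1}$ along the path lies in $\Cc_S$: each child is the intersection of the parent with a generalized half space $\{z\in S : d(z,x)\le d(z,y)\}$ or its complement, both of which lie in $\Cc_S$ by its second closure property, and $\Cc_S$ is closed under intersection by its third. Letting $X_i$ be the indicator that the $i$-th split is unbalanced, Lemma~\ref{lem:fairsplit} then gives $\P(X_i = 1 \mid \mathcal{F}_{i-1}) \le 4\tilde c^2\delta$ conditionally on the entire history $\mathcal{F}_{i-1}$, uniformly over the realized node $A_{i-1}$. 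Iterating this conditional bound yields $\P\big(\bigcap_{i\in T}\{X_i=1\}\big) \le (4\tilde c^2\delta)^{|T|}$ for every index set $T$, and a union bound over the $\binom{h}{k}$ sets of size $k$ gives $\P(U \ge k) \le \binom{h}{k}(4\tilde c^2\delta)^{k}$.

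It remains to set $k = h - B^\ast$ and simplify. Using $\binom{h}{k} \le 2^{h-1}$ together with $4\tilde c^2\delta \le 1$, the quantity $2^{h-1}(4\tilde c^2\delta)^{h-B^\ast}$ rearranges into $(8\tilde c^2\delta)^{h-1}(4\tilde c^2\delta)^{-B^\ast}$ times the factor $4\tilde c^2\delta \le 1$, which I may drop. Finally I would estimate the exponential: since $\log(1/(1-\delta)) \ge \delta$ one has $B^\ast \le \tfrac1\delta\log(n/m)$, and therefore $(4\tilde c^2\delta)^{-B^\ast} = \exp\!\big(B^\ast\log\tfrac{1}{4\tilde c^2\delta}\big) \le (n/m)^{(1/\delta)\log(1/(4\tilde c^2\delta))}$, which is exactly the claimed second factor. (The ceiling hidden in $k=\lceil h-B^\ast\rceil$ is harmless because $4\tilde c^2\delta\le 1$ makes larger exponents only smaller.)

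I expect the main obstacle to be the dependence among the splits: the node sizes $|A_i|$ are random and the events $\{X_i=1\}$ are not independent, so $U$ is not literally binomial and one cannot apply an off-the-shelf Chernoff bound. The observation that unblocks the argument is that Lemma~\ref{lem:fairsplit} holds uniformly over \emph{all} $A'\in\Cc_S$, hence conditionally on any realized history; this is precisely what turns the bound into the product estimate $\P\big(\bigcap_{i\in T}\{X_i=1\}\big)\le(4\tilde c^2\delta)^{|T|}$ despite the dependence. Once that is in place, the rest is the deterministic size-decay bookkeeping and the elementary inequality $\log(1/(1-\delta))\ge\delta$.
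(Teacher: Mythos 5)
Your proof is correct, but it takes a genuinely different route from the paper's. You decompose the event deterministically: balanced splits force geometric decay $|A|\le(1-\delta)^{B}n$, so $\{|A|>m\}$ forces at least $h-B^\ast$ unbalanced splits, and you then control that count by iterating the conditional bound from Lemma~\ref{lem:fairsplit} and taking a union bound over the $\binom{h}{k}$ possible positions of the unbalanced splits. The paper instead applies Markov's inequality to the $t$-th moment of the size ratio $|A_h|^t/|A_1|^t$, telescopes it into a product of conditional expectations, and bounds each factor by $4\tilde c^2\delta+(1-\delta)^t\le 8\tilde c^2\delta$ for $t=(1/\delta)\log(1/(4\tilde c^2\delta))$ --- a Chernoff-style argument that packages the balanced/unbalanced dichotomy inside a single expectation rather than a combinatorial count. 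Both hinge on exactly the same two ingredients (the uniform-over-history applicability of Lemma~\ref{lem:fairsplit}, which you correctly identify as the crux given the dependence between splits, and the inequality $\log(1/(1-\delta))\ge\delta$), and both land on the same bound; your version is more elementary and makes the mechanism more transparent, while the paper's avoids the binomial-coefficient bookkeeping and the ceiling/degenerate-case caveats (e.g.\ $h\le B^\ast$, where your bound is vacuous but should be noted to exceed $1$). The only other discrepancy is an off-by-one in the depth convention --- you take $h$ splits along the path where the paper's indexing $A_1\supset\cdots\supset A_h$ has $h-1$ --- but the claimed bound holds under either convention, with a little extra slack in the paper's.
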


We finish the proof of Theorem~\ref{thm:treeheight} by observing that
the height of the tree is greater than $h$ only if there is a node at 
depth $h$ of size greater than $n_0$. By taking a union over all possible 
$2^h$ nodes at depth $h$, one can see that the probability of this event is
at most $2(16\tilde{c}^2\delta)^{h-1}\left({{n}/{n_0}}\right)^{
({{1}/{\delta}}) \log({{1}/{4\tilde{c}^2\delta})}}$. This probability is less than $\epsilon$
if we fix $\delta = {{1}/{32\tilde{c}^2}}$, and 
\begin{displaymath}
h = \left( 1 + 2\log \left(\frac{e}{\epsilon}\right)
+ 96\tilde{c}^2 \log \left(\frac{n}{n_0}\right) \right)
\leq h^* \;.
\end{displaymath}

We now prove the above two lemmas.
\begin{proof}[Proof of Lemma~\ref{lem:fairsplit}]
Let $|A|=m$, and $\1(\cdot)$ denote the indicator function. Then
the probability of splitting $A$ to create a child 
of size smaller than $\delta m$ is at most
\begin{align*}
&{\textstyle\frac{1}{m(m-1)}}
\hspace{-4mm}
\sum_{\substack{x_1\in A\\x_2\in A\backslash\{x_1\}} }
\hspace{-4mm}
\1\Big( \{ | \{x\in A: d(x,x_2) \leq d(x,x_1)\}| \leq \delta m\}
\nonumber
\\& \hspace{17mm}
\cup \{ | \{x\in A: d(x,x_1) < d(x,x_2)\}| \leq \delta m\} \Big)
\nonumber
\\&\leq
{\textstyle\frac{1}{\binom{m}{2}}}
\hspace{-4mm}
\sum_{\substack{x_1\in A\\x_2\in A\backslash\{x_1\}} }
\hspace{-4mm}
\1\Big( \big| \{x\in A: d(x,x_1) < d(x,x_2)\} \big| \leq \delta m \Big).
\end{align*} 
Note that the set $\{x\in A: d(x,x_1) < d(x,x_2)\}$ contains the  set
$B\left(x_1, \frac{1}{4} d(x_1,x_2)\right)\cap A$, where $B(\cdot,\cdot)$ is the closed ball.
Thus, one may bound the above probability by the fraction of $x_1,x_2$
pairs for which this ball contains less than $\delta m$ elements.
Moreover, using the condition of strong expansion rate~\eqref{eq:strongexpansion},
one has
\begin{align*}
\left| B\big(x_1,d(x_1,x_2)\big) \cap A \right|
&\leq \tilde{c}^2
\left| B\left(x_1, \textstyle\frac{1}{4} d(x_1,x_2)\right) \cap A\right| \;.
\end{align*}
Thus, one may only count the $x_1,x_2$ pairs for which
$B(x_1,d(x_1,x_2)) \cap A$ contains at most $\tilde{c}^2\delta m$
elements. Now, for every $x_1$, if one sorts $x_2$ in the increasing order 
of $d(x_1,x_2)$, then the indicator is true only for the 
first $\tilde{c}^2 \delta m$ of $x_2$'s.
Thus, the probability of an unbalanced split 
is at most ${{2\tilde{c}^2\delta m^2}/{m(m-1)}}\leq 4\tilde{c}^2\delta$.
\end{proof}
\begin{proof}[Proof of Lemma~\ref{lem:nodesize}]
We denote the path from the root of the tree to $A$ by
$S = A_1 \supset A_2 \supset \ldots \supset A_{h-1} \supset A_h = A$.
Let $A'_j$ denote the sibling of $A_j$ for $j\geq 2$.
By the Markov inequality, one can write for any $t>0$,
\begin{align*}
\P(|A| > m) 
& \leq \left(\frac{n}{m}\right)^t \E\left[ \frac{|A_h|^t}{|A_1|^t}\right] 
\\&= \left(\frac{n}{m}\right)^t \E\left[ \frac{|A_{h-1}|^t}{|A_1|^t} \E\left[ 
\left. \frac{|A_{h}|^t}{|A_{h-1}|^t} \right| A_{h-1} \right] \right] .
\end{align*}
One can bound the inner conditional expectation as
\begin{align*}
&\E\left[\left. \frac{|A_{h}|^t}{|A_{h-1}|^t} \right| A_{h-1} \right]
\\&= \E\left[\left. \frac{|A_{h}|^t}{|A_{h-1}|^t} \1\big(|A_h|>(1-\delta)|A_{h-1}|\big) \right| A_{h-1} \right]
\\&\qquad + \E\left[\left. \frac{|A_{h}|^t}{|A_{h-1}|^t} \1\big(|A_h|\leq(1-\delta)|A_{h-1}|\big) \right| A_{h-1} \right]
\\&\leq  \P\big(|A'_h|\leq \delta|A_{h-1}|\big)
+ (1-\delta)^t \;,
\end{align*} 
where the inequality follows by replacing the ratio by 1 in the first expectation,
and $|A_h|$ by its upper bound in the second one. 
Due to Lemma~\ref{lem:fairsplit}, one can see that this bound is at most
$\left(4\tilde{c}^2 \delta + (1-\delta)^t\right)$. 
For $t = ({{1}/{\delta}})\log({{1}/{4\tilde{c}^2\delta}})$, one can use the fact that
$({{1}/{\delta}})\log({{1}/{(1-\delta)}}) \geq 1$ to 
show that the above expectation is at most $8\tilde{c}^2\delta$.

Subsequently, we use the same technique of conditioning with every $A_j$,
$j=2,\ldots,h-2$ to obtain
\begin{displaymath}
\P(|A|>m) \leq \left(\frac{n}{m}\right)^{\frac{1}{\delta}\log(\frac{1}{4\tilde{c}^2\delta})} (8\tilde{c}^2\delta)^{h-1} .
\end{displaymath}\vspace{-.7cm}
\end{proof}
\subsubsection{Proof of Theorem~\ref{thm:error}}
The nearest neighbor search for a query point~$q$ is done by traversing the tree from the root to one of the leaves. 
Let us denote the visited path by~$S = A_1 \supset A_2 \supset \ldots \supset A_{k-1} \supset A_k$,
where $A_k$ is the leaf node containing $\hat{x}_q$.
We assume that $q\notin S$, as otherwise the nearest neighbor search algorithm returns the query. 
By simple reasoning, it follows that $\hat{x}_q\neq x_q$ only if $x_q\notin A_k$,
which happens if there is $l\in\{1,2,\ldots,k-1\}$ such that $x_q\in A_l \backslash A_{l+1}$. 
Hence,
\begin{align}
&\P( \hat{x}_q \neq x_q)
=\P \left(\bigcup_{l=1}^{k-1} \{x_q \in A_l, x_q \notin A_{l+1} \}\right)
\nonumber
\\&\leq \sum_{l=1}^{k-1} \P (x_q \notin A_{l+1} | x_q \in A_l) \P( x_q \in A_l) 
\nonumber
\\&\leq \sum_{l=1}^{k-1} \P \left(x_q \notin A_{l+1} \left| x_q \in A_l,|A_l| \geq m_l \right.\right) \P\left(|A_l| \geq  m_l \right) +
\nonumber
\\&\P \left(x_q \notin A_{l+1} \left| x_q \in A_l,|A_l| <m_l \right.\right) \P\left(|A_l| <  m_l \right)
\label{eq:errorboundproof}
\end{align}
where $m_l={{n_0}/{(1-\gamma)^{k-1-l}}}$ for some $\gamma\in(0,1)$.
The first inequality is due to union bound, while the second one uses $\P( x_q \in A_l)\leq 1$ and
further decomposes based on $|A_l|$.

\begin{lemma}[{\bf Probability of missing nearest neighbor in one branch}]
\label{lem:losingNN}
Under the condition on $q$ stated in Theorem~\ref{thm:error},
for any $l=1,2,\ldots,k-1$,
\begin{displaymath}
\P (x_q \notin A_{l+1} | A_l, x_q \in A_l)  \leq C|A_l|^{-\alpha} \;.
\end{displaymath}
\end{lemma}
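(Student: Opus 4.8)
My plan is to interpret the failure event probabilistically and reduce it to a counting problem governed by condition~\eqref{eq:querycondn}. Conditioning on the node $A_l$ (which lies in $\Cc_S$ and contains $x_q$), the only randomness left is the choice of the two pivots $x_1,x_2$, sampled as a uniformly random distinct pair from $A_l$. The search routes $q$ into the child determined by the nearer pivot, so the nearest neighbor leaves the search path, i.e. $x_q\notin A_{l+1}$, exactly when the generalized bisector of $x_1,x_2$ separates $q$ from $x_q$ — that is, when $q$ is closer to one pivot while $x_q$ is closer to the other. First I would note that if either pivot equals $x_q$ no separation can occur: since $x_q$ is the nearest neighbor, $d(q,x_q)\le d(q,\cdot)$ forces $q$ onto the same side as $x_q$. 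Hence every separating pair lies in $A_l\backslash\{x_q\}$, and condition~\eqref{eq:querycondn} will be applicable.

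The crux is a triangle-inequality argument pinning the separating pivots into a thin annulus around $q$. Suppose the pair separates $q$ and $x_q$, and let $u$ be the pivot $q$ is closer to and $v$ the other, so $d(q,u)\le d(q,v)$ while $x_q$ is closer to $v$, i.e. $d(x_q,v)<d(x_q,u)$. Writing $\rho=d(q,x_q)$, two applications of the triangle inequality give $d(x_q,v)<d(x_q,u)\le \rho+d(q,u)$ and then $d(q,v)\le \rho+d(x_q,v)<2\rho+d(q,u)$. Combined with $d(q,v)\ge d(q,u)$, this shows $v$ lies in the annulus $B\big(q,d(q,u)+2\rho\big)\backslash B^{\circ}\big(q,d(q,u)\big)$ — precisely the set whose cardinality is controlled by~\eqref{eq:querycondn}, the width $2\rho=2d(q,x_q)$ matching the term $+2d(q,x_q)$ appearing there.

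It then remains to count. Applying~\eqref{eq:querycondn} with $A=A_l$ and $x=u$, the number of admissible $v$ for each fixed closer pivot $u$ is at most $\big|B(q,d(q,u)+2\rho)\cap A_l\big|-\big|B^{\circ}(q,d(q,u))\cap A_l\big|\le C|A_l|^{1-\alpha}$. Summing over the at most $|A_l|$ choices of $u$ bounds the number of separating unordered pairs by $C|A_l|^{2-\alpha}$, and dividing by the total number of pairs $\binom{|A_l|}{2}$ yields the claimed bound of order $C|A_l|^{-\alpha}$.

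The main obstacle I anticipate is getting the geometry exactly right rather than the counting, which is routine once the annulus is identified: one must verify the direction of every inequality (which pivot plays the role of $u$, and the strict versus non-strict comparisons forced by the $\leq$ versus $>$ split rule of Algorithm~1), confirm that the derived annulus matches the asymmetric closed/open balls in~\eqref{eq:querycondn}, and handle ties in distances to $q$. A secondary, purely bookkeeping issue is the precise constant: the naive count gives roughly $2C|A_l|^{1-\alpha}/(|A_l|-1)$, so recovering the clean bound $C|A_l|^{-\alpha}$ amounts to absorbing small constant factors, which are in any case subsumed into the larger constants of Theorem~\ref{thm:error}.
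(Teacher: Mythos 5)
Your proposal is correct and follows essentially the same route as the paper: reduce to counting pivot pairs that separate $q$ from $x_q$, use the triangle inequality (twice) to place the farther pivot in the annulus $B\big(q,d(q,u)+2d(q,x_q)\big)\backslash B^{\circ}\big(q,d(q,u)\big)$ around the nearer pivot $u$, bound the annulus size by $C|A_l|^{1-\alpha}$ via~\eqref{eq:querycondn}, and sum over $u$. The tie-breaking and factor-of-two issues you flag are also present (and glossed over) in the paper's own argument and are absorbed by the slack in the constants of Theorem~\ref{thm:error}.
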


Note that for the two conditional probabilities in~\eqref{eq:errorboundproof}, $|A_l|$ is at least
$m_l$ and $n_0$, respectively. Using Lemma~\ref{lem:losingNN},
one can bound these probabilities.
To obtain a bound on $\P\left(|A_l| <  m_l \right)$,
we follow Lemma~\ref{lem:nodesize}.

Observe that Lemma~\ref{lem:nodesize} implies that after repeated splits, 
it is less likely that the ratio of the final node to the root node will be large. In the present context, 
we know that $|A_{k-1}| \geq n_0$. Thus, for any $l<k-1$,  
the bound in Lemma~\ref{lem:nodesize} can be used to argue that
${{|A_{k-1}|}/{|A_l|}}$ cannot be large. Formally,
\begin{align*}
\P\left(|A_l| <  m_l \right) &=
\P\left(\frac{|A_{k-1}|}{|A_l|} >  \frac{n_0}{m_l} \right)
\\&\leq (8\tilde{c}^2\delta)^{k-1-l}\left(\frac{1}{(1-\gamma)^{k-1-l}}\right)^{
\frac{1}{\delta} \log(\frac{1}{4\tilde{c}^2\delta})}
\\&= \left(\frac{0.25}{(1-\gamma)^{96\tilde{c}^2\log 2}}\right)^{k-1-l},
\end{align*}
using $\delta = {{1}/{32\tilde{c}^2}}$ as in Theorem~\ref{thm:treeheight}.
Substituting the above bound in~\eqref{eq:errorboundproof}
and using Lemma~\ref{lem:losingNN}, we have
\begin{align*}
\P(\hat{x}_q\neq x_q)
\leq &\sum_{l=1}^{k-1} Cn_0^{-\alpha} (1-\gamma)^{\alpha(k-1-l)}
\\&+ Cn_0^{-\alpha}\left(\frac{0.25}{(1-\gamma)^{96\tilde{c}^2\log 2}}\right)^{k-1-l}.
\end{align*}
Choosing $\gamma = 1 - (0.25)^{1/(\alpha+96\tilde{c}^2\log 2)}$, one can see that the second term is smaller
than the first, and hence, 
\begin{align*}
\P(\hat{x}_q\neq x_q)
&\leq \frac{2Cn_0^{-\alpha}}{1 - (1-\gamma)^{\alpha}}
\leq \frac{2Cn_0^{-\alpha}}{1-(0.25)^{\alpha/67.5\tilde{c}^2}} \;.
\end{align*}
From above, we obtain the bound in~\eqref{eq:errorbound} by using the relation 
$1-a \leq b(1-a^{1/b})$, which holds for any $a\in(0,1)$ and $b>1$.
This proves Theorem~\ref{thm:error}.

We end the section with the proof of Lemma~\ref{lem:losingNN}.
\begin{proof}[Proof of Lemma~\ref{lem:losingNN}]
Let $|A_l| = m$, and let us order the elements such that $d(q,x_i)\leq d(q,x_j)$ for $i<j$.
Since $x_q\in A_l$, we have $x_1 = x_q$. Note that if $x_q\notin A_{l+1}$, then it is certainly not a pivot element.
Moreover, if $x_i,x_j\in A_l$ are the pivot elements for $i<j$, then $x_q\notin A_{l+1}$ implies $d(x_q,x_i)\geq d(x_q,x_j)$.
The inequality can even be strict depending on which is chosen as the left pivot. Hence, we have
\begin{align*}
\P (x_q &\notin A_{l+1} | A_l, x_q \in A_l)  
\\&\leq
\frac{1}{m(m-1)} \sum_{2\leq i<j\leq m} \1\big(d(x_q,x_i)\geq d(x_q,x_j)\big)\;.
\end{align*}
By the triangle inequality,
\begin{align*}
&d(x_q,x_i) \leq d(q,x_i) + d(q,x_q)  \text{~~and}
\\&d(x_q,x_j) \geq d(q,x_j) - d(q,x_q) \;.
\end{align*}
Hence, one may count the pairs $i<j$ for which
\begin{displaymath}
d(q,x_j) \leq d(q,x_i) + 2d(q,x_q)\;.
\end{displaymath}
As a consequence, we can write
\begin{align*}
&\P (x_q \notin A_{l+1} | A_l, x_q \in A_l)  
\leq  \frac{1}{m(m-1)}\cdot
\\&\sum_{i=2}^m \big| \{x\in A: d(q,x_i) \leq d(q,x) \leq d(q,x_i)\text{+}2d(q,x_q)\}\big|,
\end{align*}
where each term in the sum is at most $Cm^{1-\alpha}$ due to the assumption on $q$. Thus, the claim of the lemma is true.
\end{proof}

\begin{figure*}\hspace{-.35cm}
    \centering
    \begin{subfigure}[b]{0.238\textwidth}
        \centering
        \includegraphics[width=\textwidth]{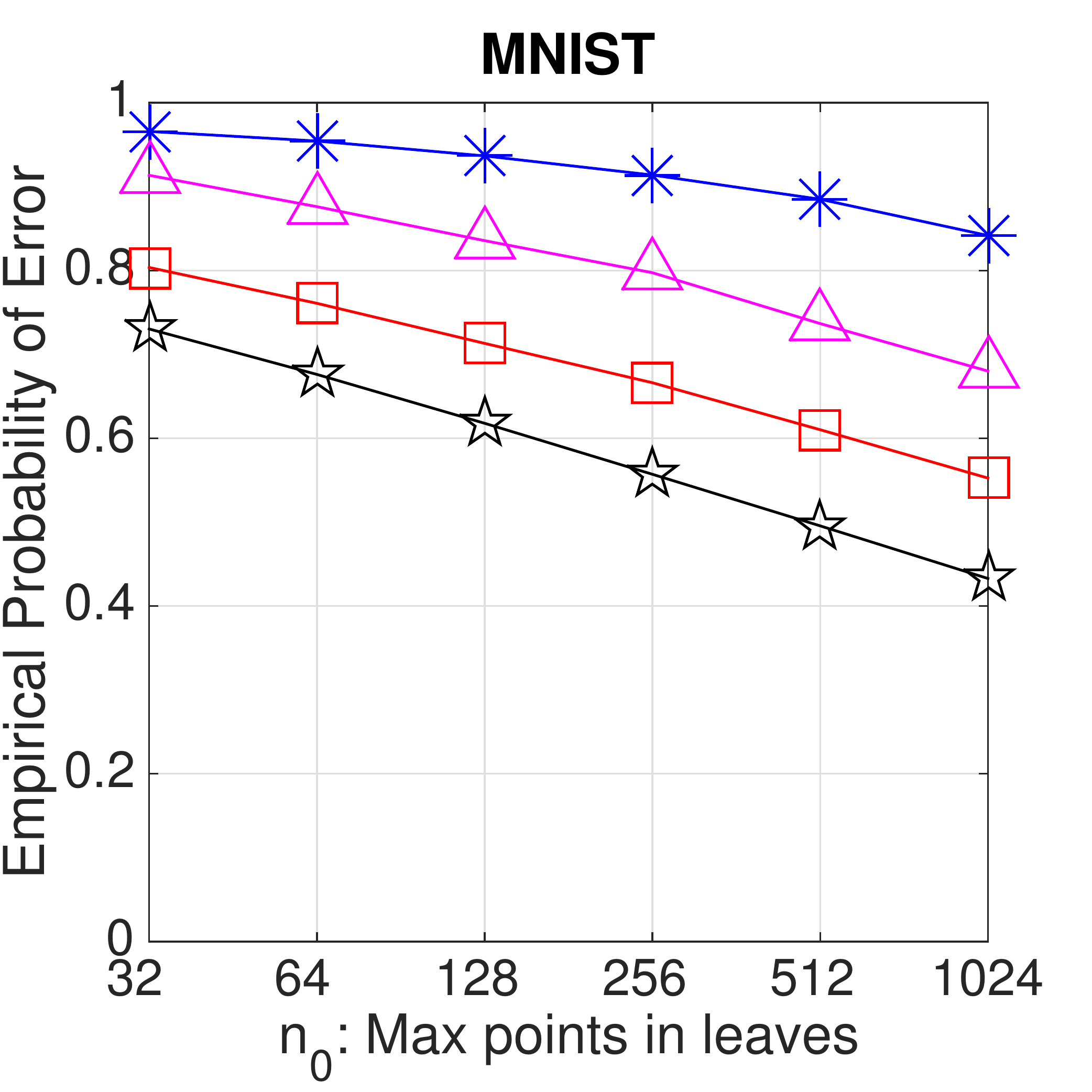}
        \vspace{-.5cm}\caption{MNIST}
        \label{fig:mnist}
    \end{subfigure} \hspace{-.35cm}
    \begin{subfigure}[b]{0.238\textwidth}
        \centering
        \includegraphics[width=\textwidth]{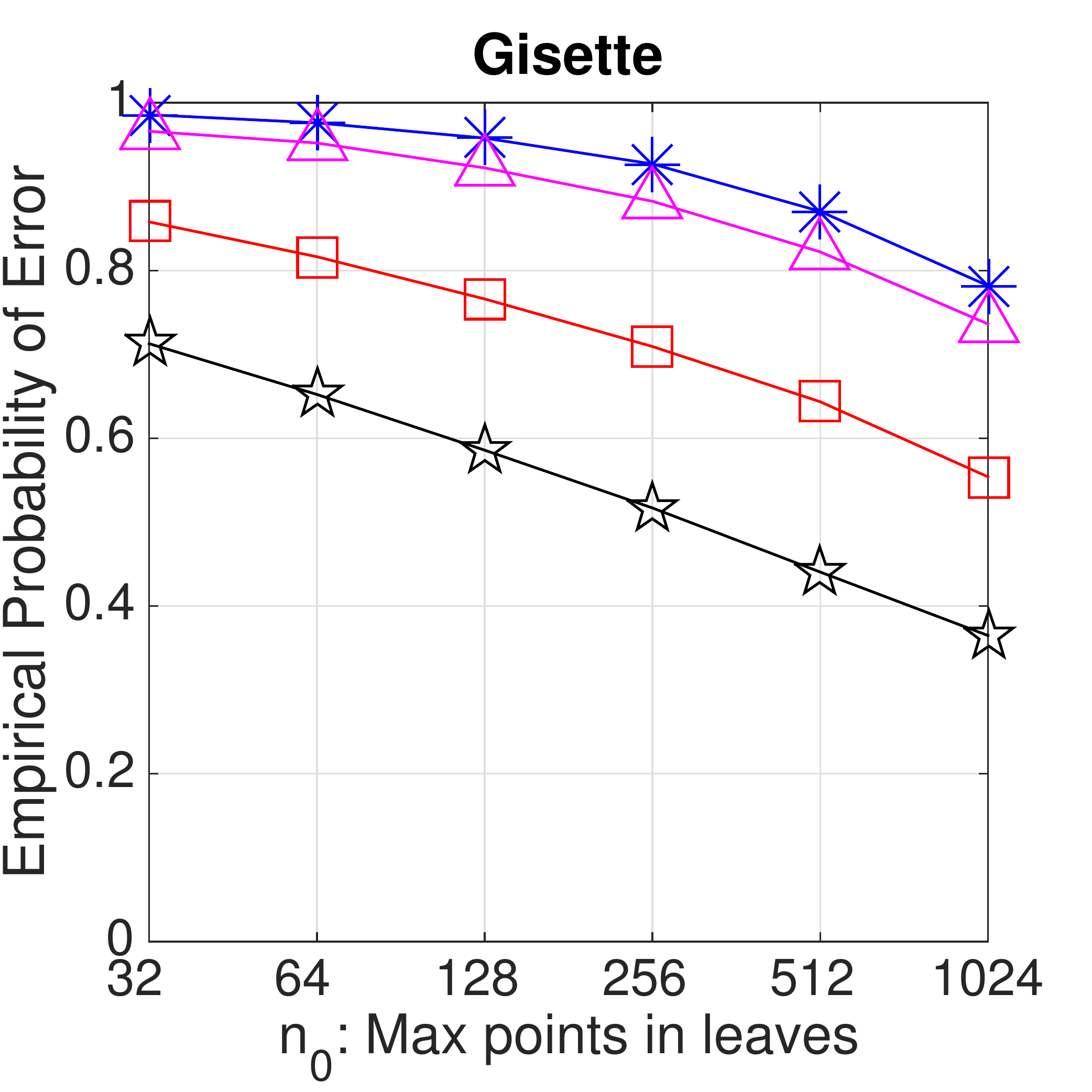}
        \vspace{-.5cm}\caption{Gisette}
        \label{fig:gisette}
    \end{subfigure}\hspace{-.3cm}
    \begin{subfigure}[b]{0.238\textwidth}
        \centering
        \includegraphics[width=\textwidth]{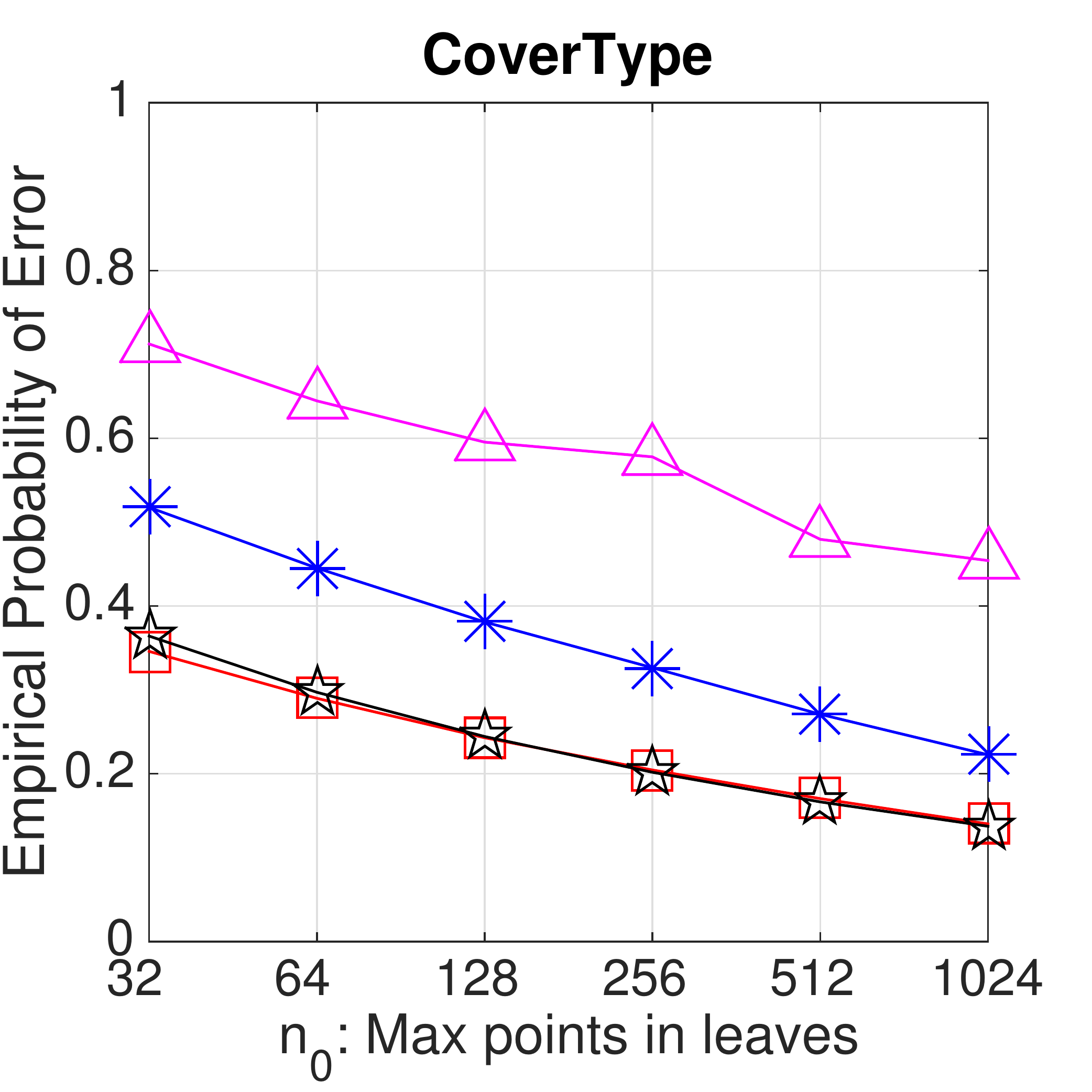}
        \vspace{-.5cm}\caption{CoverType}
        \label{fig:covtype}
    \end{subfigure}\hspace{-.3cm}
    \begin{subfigure}[b]{0.338\textwidth}
        \centering
        \includegraphics[width=\textwidth]{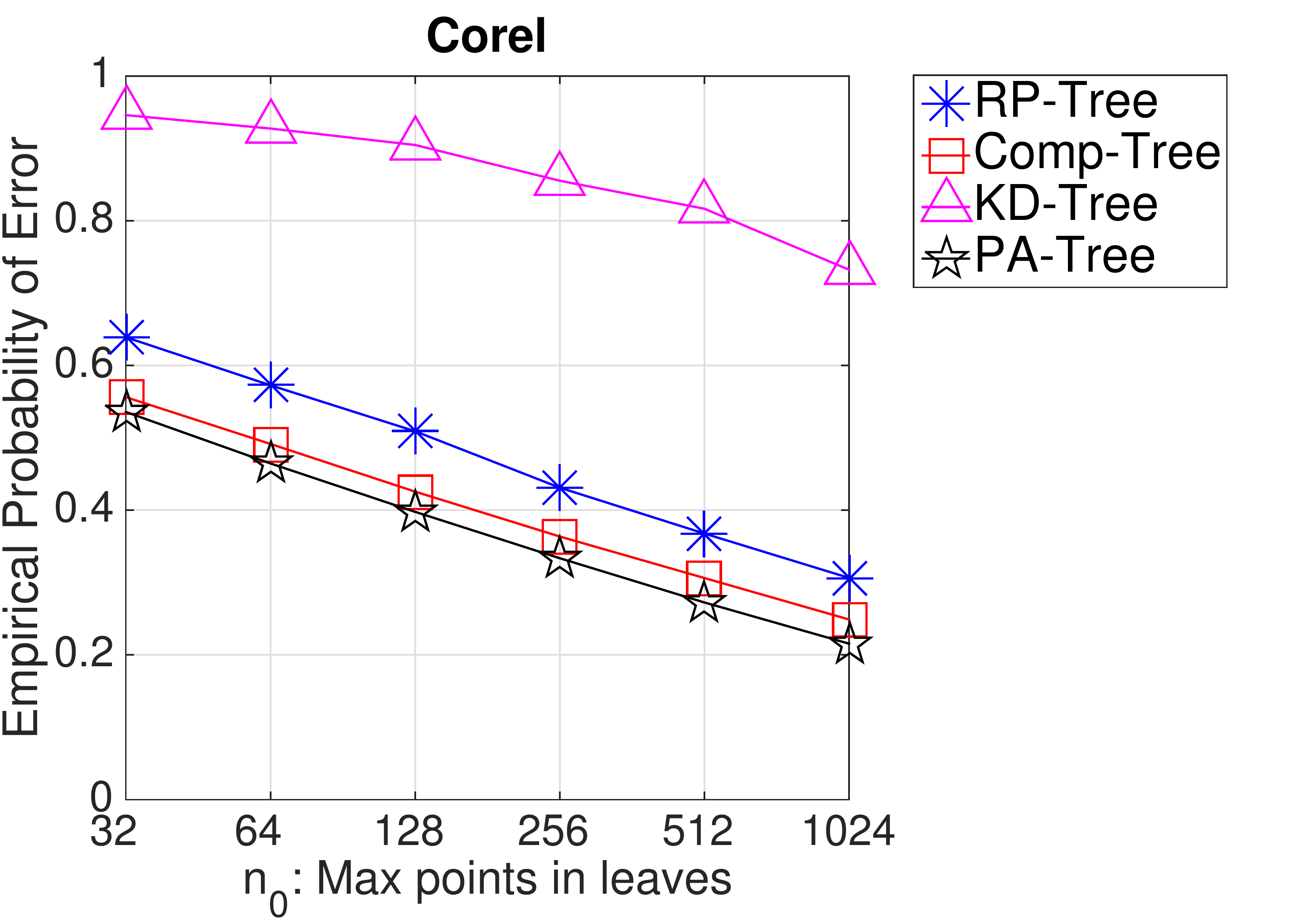}
        \vspace{-.5cm}\caption{Corel}
        \label{fig:corel}
    \end{subfigure}\hspace{-.45cm}
    \caption{Performance comparison of the comparison tree versus other binary space partitioning trees with respect to the maximum leaf size, $n_0$.}
    \label{fig:BSPComparisons}
\end{figure*}
\section{EXPERIMENTS}

\subsection{Euclidean Setting}

In this section we compare the performance of comparison trees to standard space partitioning trees in Euclidean spaces. Note that the latter have access to the vector representation of the points (and thus also to all pairwise distances), whereas the comparison tree only has access to triplet comparisons. Thus, the purpose of this comparison cannot be to show that the comparison tree ``outperforms'' the other ones, but to examine whether it is much worse or not. There are numerous tree constructions in Euclidean spaces. Based on the results in \citet{ram2013space} we decided to compare with KD-Tree \citep{KdtreeBentley1975multidimensional}, RP-Tree \citep{dasgupta2008random} and PA-Tree %
\citep{mcnames2001fast}. 

A description of the datasets is presented in Table~\ref{tab:datasets}. MNIST is a dataset of hand-written digits \citep{lecun1998gradient}. Gisette, CoverType and Chess (King-Rook vs. King) are from the UCI repository \citep{Lichman:2013}. Corel is a subset of histograms as it is used in \citet{liu2004investigationSpill}. CoAuth is the collaboration network of Arxiv High Energy Physics from \citet{floridaSparse}. We used the largest connected component of the graph and the shortest path as metric. MSC (Boeing/msc10848) is a similar but weighted graph from \citet{floridaSparse}\footnote{There are negative edge weights in the graph, however we used absolute values of edge weights to have a metric by using shortest path distances.}. %

\begin{table}[ht]
    \centering
    \caption{Description of datasets}
    \begin{tabular}{|c|c|c|c|} \hline
         Dataset & Size & Dimension & Distance \\ \hline\hline
         MNIST & 70000 & 784 & Euclidean \\ \hline
         Gisette & 12500 & 5000 & Euclidean \\ \hline
         CoverType & 50000 & 53 & Euclidean \\ \hline
         Corel & 19787 & 44 & Euclidean \\ \hline
         Chess & 28056 & 6 & Mismatch \\ \hline
         CoAuth & 11204 & - & Shortest Path \\ \hline
         MSC & 10848 & - & Shortest Path \\ \hline
    \end{tabular}
     \label{tab:datasets}
\end{table}

We assess the performance of the nearest neighbor search by the leave-one-out method. As the performance measure, we report the empirical probability of missing the nearest neighbor: $({{1}/{\vert S \vert}}) \sum_{q \in S} \mathds{1}\left( \NN_{alg}(q) \neq \NN(q)\right)$. Here $S$ is the whole dataset, $\NN_{alg}$ denotes the result of nearest neighbor search by the algorithm while the true nearest neighbor is $\NN(q)$. 

Figure~\ref{fig:BSPComparisons} shows the performance of the comparison tree versus other methods in the Euclidean space. The comparison tree has less error compared to RP-Tree and KD-Tree, and has slightly worse performance comparing with the PA-Tree. However, the differences are not huge, and we find the behavior of the comparison tree quite satisfactory, given that it receives much less input information than the other methods. 

\subsection{Comparison-Based Setting}

Among the few comparison-based methods cited in the introduction, many are not practical or have already been shown to perform sub-optimally. The most promising competitor to our method is the Rank Cover Tree (RCT) \citep{houle2015rank}. As the original implementation of the authors was not available, we implemented the method ourselves in Matlab. 

We have two objectives when comparing the two comparison-based trees: the number of required triplet questions, and the accuracy they achieve in the nearest neighbor search. While the latter is easy to compare, the former is more of a challenge. It is impossible to construct  an RCT with the same number of triplets that the comparison tree requires in construction phase, since the RCT needs orders of magnitude more triplets in construction. Thus, we decided to construct both trees in such a way that the number of triplet comparisons in the \textit{query phase} is matched. We then compare the search performance, but also the number of triplets in the tree construction phase. For the RCT, the performance and the number of comparisons can be balanced by adjusting the coverage parameter $\omega$, see \citet{houle2015rank} Section 4. For comparison trees, $n_0$ plays a similar role. By varying these two parameters we match the number of comparisons in the query phase.%

We randomly choose 1000 data points in each experiment as test set for the query phase and the rest of the dataset for the tree construction. The empirical error defined in previous section is not well-defined for some of datasets in this section. In CoAuth and Chess dataset, many points have more than one nearest neighbor. Thus, we report the average relative distance error defined as $({{1}/{\vert S \vert}})\sum_{q\in S}\left(\frac{d_{alg}}{d_{\NN}} -1\right)$\citep{liu2004investigationSpill}. Here $d_{alg}$ denotes the distance of query to the predicted nearest neighbor by the algorithm and $d_{\NN}$ denotes the distance of the query to the true nearest neighbor.
\begin{figure*}[ht]
    \centering
    \begin{subfigure}{0.23\textwidth}
        \centering
        \includegraphics[width=\textwidth]{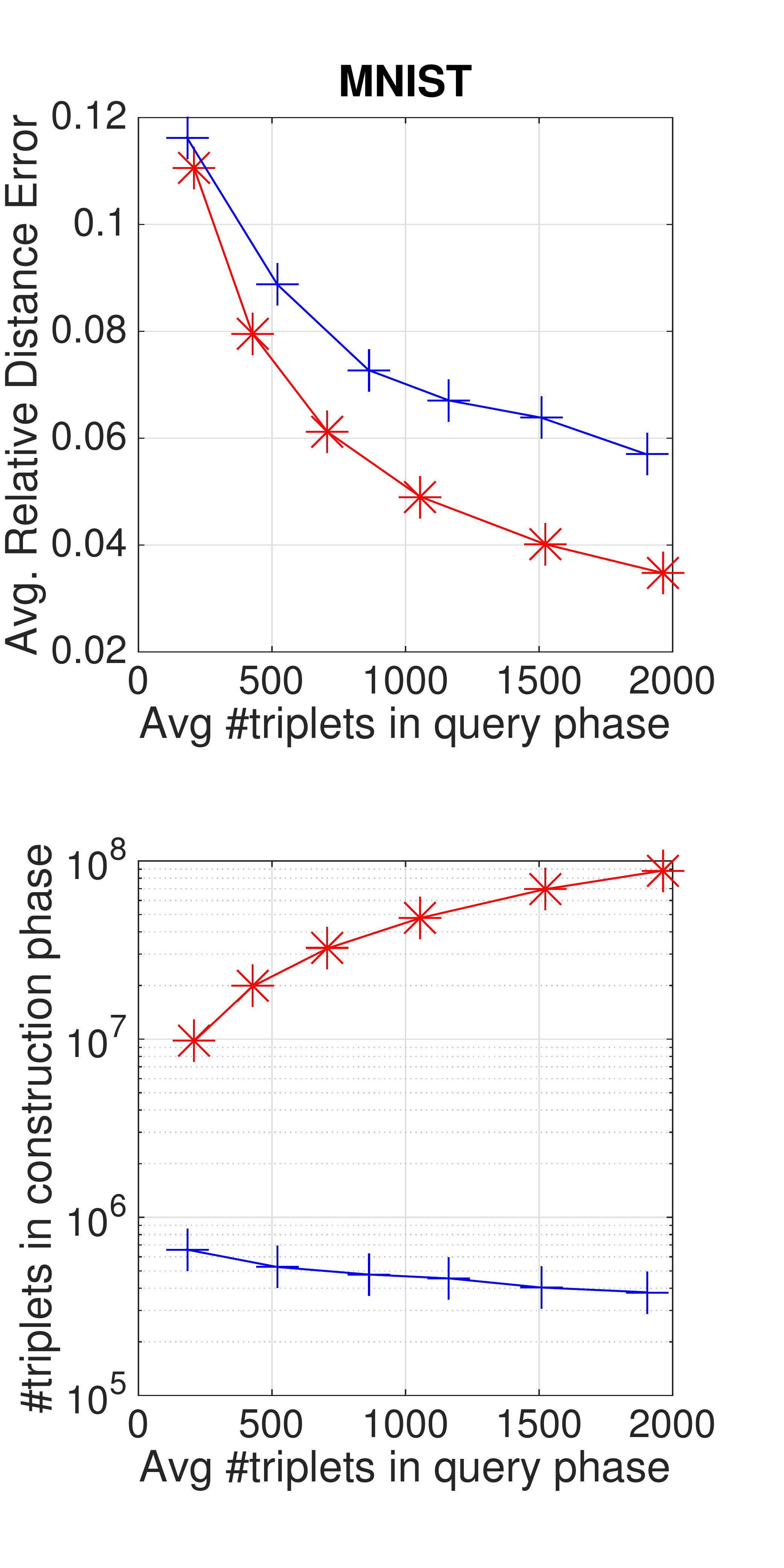}
        \vspace{-.8cm}\caption{MNIST}
        \label{fig:a}
    \end{subfigure} \hspace{-.4cm}
    \begin{subfigure}{0.23\textwidth}
        \centering
        \includegraphics[width=\textwidth]{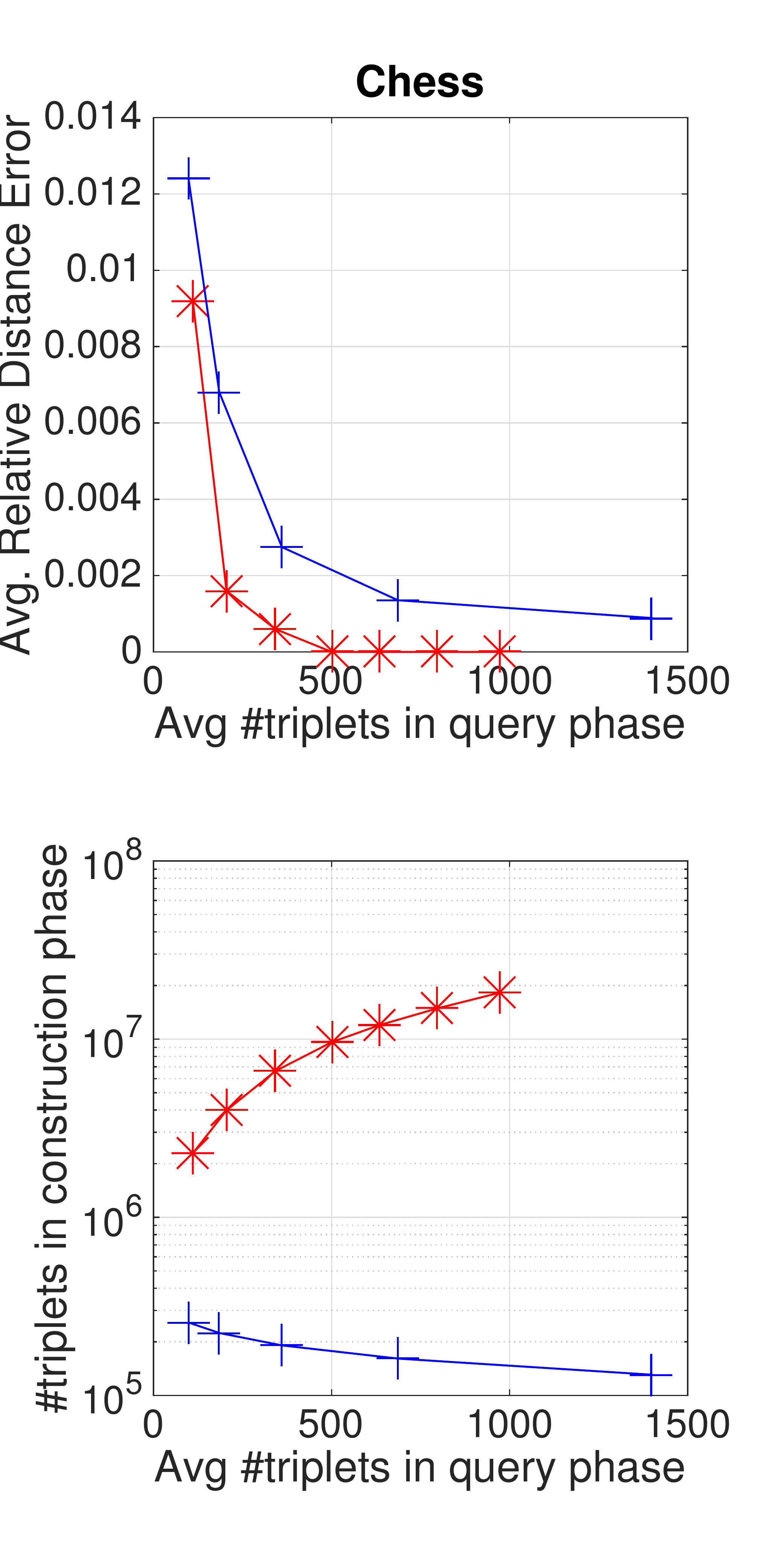}
        \vspace{-.8cm}\caption{Chess}
        \label{fig:e}
    \end{subfigure}\hspace{-.35cm}
    \begin{subfigure}{0.23\textwidth}
        \centering
        \includegraphics[width=\textwidth]{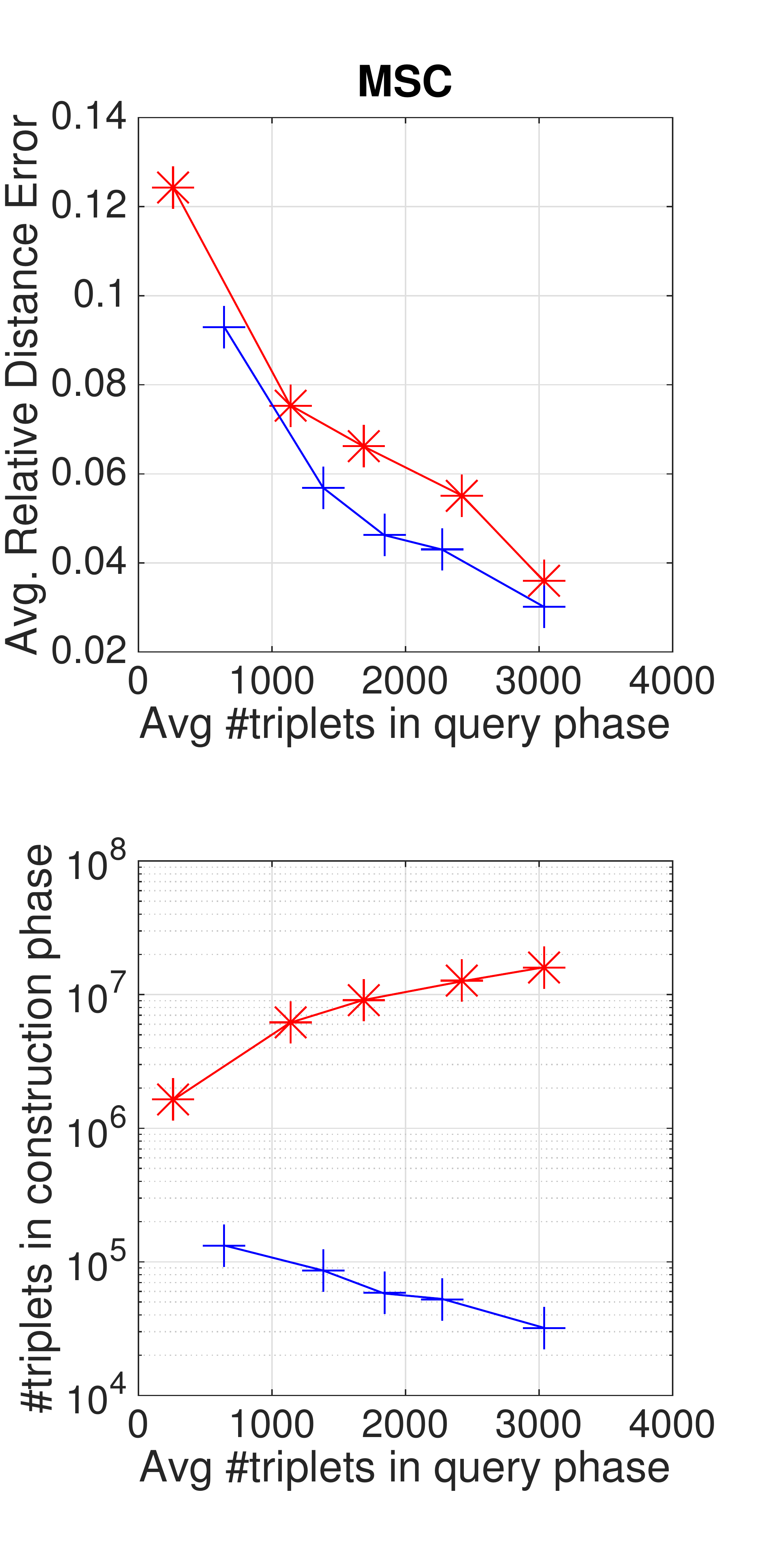}
        \vspace{-.8cm}\caption{MSC}
        \label{fig:f}
    \end{subfigure}\hspace{-.4cm}
    \begin{subfigure}{0.33\textwidth}
        \centering
        \includegraphics[width=\textwidth]{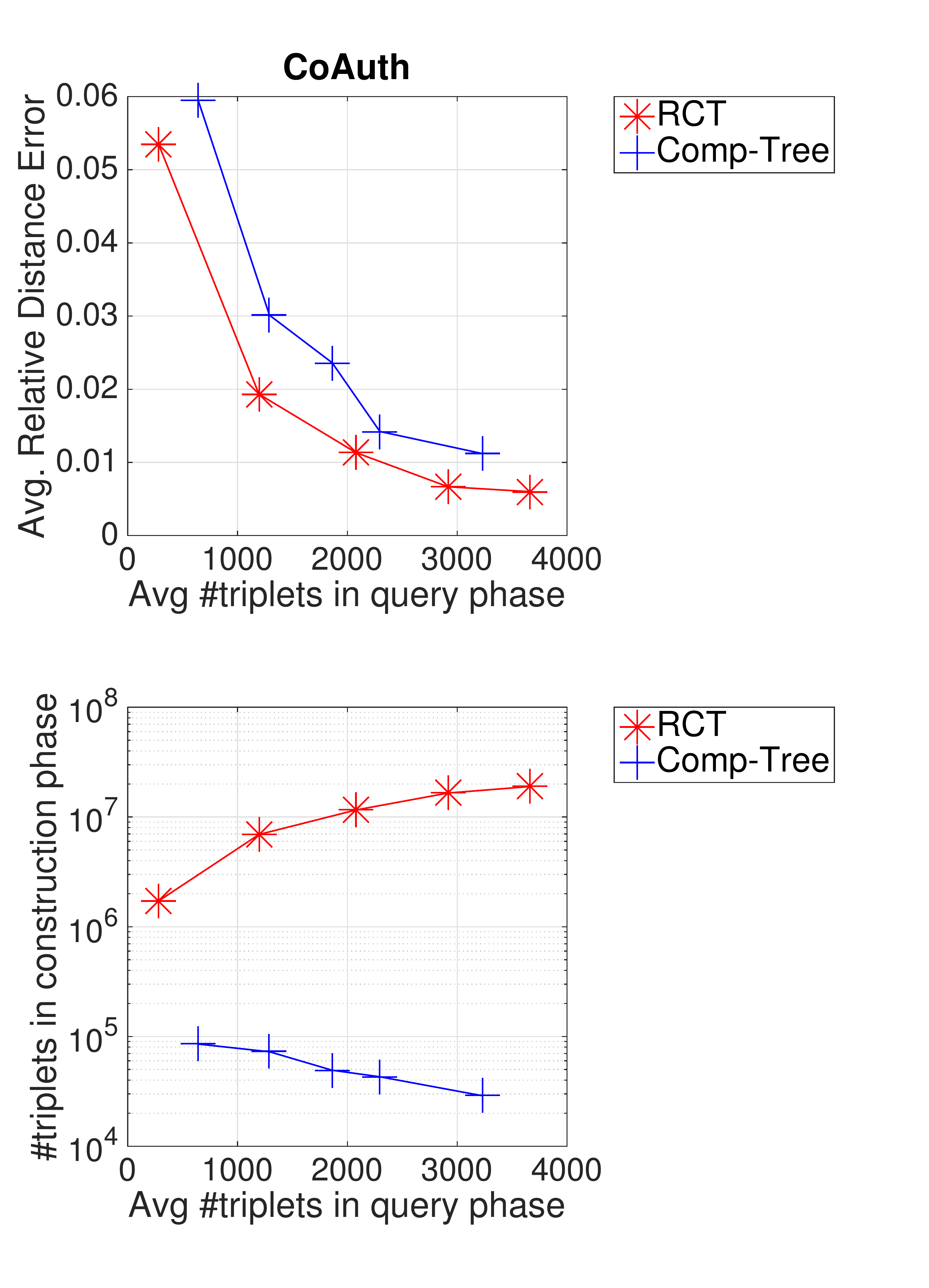}
        \vspace{-.8cm}\caption{CoAuth}
        \label{fig:g}
    \end{subfigure}\hspace{-.5cm}
        \vspace{-.2cm}\caption{Comparing the performance of the comparison tree versus the RCT with various parameters in the construction phase. The top row corresponds to the average relative distance error with respect to the average number of triplets that each method used in the query phase. The bottom row shows the number of triplets used in the construction phase for both methods. Note that x-axis is the same in both rows.}
    \label{fig:RCTMT}
\end{figure*}

Figure~\ref{fig:RCTMT} shows the performance of the comparison tree compared to the RCT on four datasets from Table~\ref{tab:datasets}. The results on the remaining Eulidean datasets are very similar to MNIST, hence we do not present them. We consider different parameter settings and match the average number of triplets used in the query phase. In terms of the relative distance errors, the RCT works slightly better in datasets with low intrinsic dimension, specially when we are provided with more triplets in query phase. However, as the bottom row in Figure~\ref{fig:RCTMT} shows, to achieve this performance the RCT needs orders of magnitude more triplet comparisons in the tree construction phase. Therefore, if answering triplet comparisons is expensive, then the comparison tree clearly is a good alternative to the RCT.%

\subsection{Expansion Rate Approximation}
In our theoretical analysis, we used the strong expansion condition defined in Equation~\eqref{eq:strongexpansion}. It is an obvious question to find out how strong these conditions really are and what the corresponding constants in our datasets would be. To this end, we provide a method to estimate the expansion rates for our datasets. We fix a dataset, for each point $x$ we look for the smallest $\tilde{c}$ such that Equation~\eqref{eq:strongexpansion} holds for that particular point. We find the smallest value with respect to various radii~$r$. In this way we estimate an empirical pointwise $\tilde{c}(x)$ value for each point. Since the definition depends on the number of points in the dataset, we randomly choose 10000 points from each dataset for these experiments. The distribution of empirical expansion rates~$\tilde{c}(x)$ are plotted by box-and-whisker plots in Figure~\ref{fig:ExpRatesAll}. 

\begin{figure}[ht]
    \centering
    \begin{subfigure}[b]{0.19\textwidth}
        \centering
        \includegraphics[width=\textwidth]{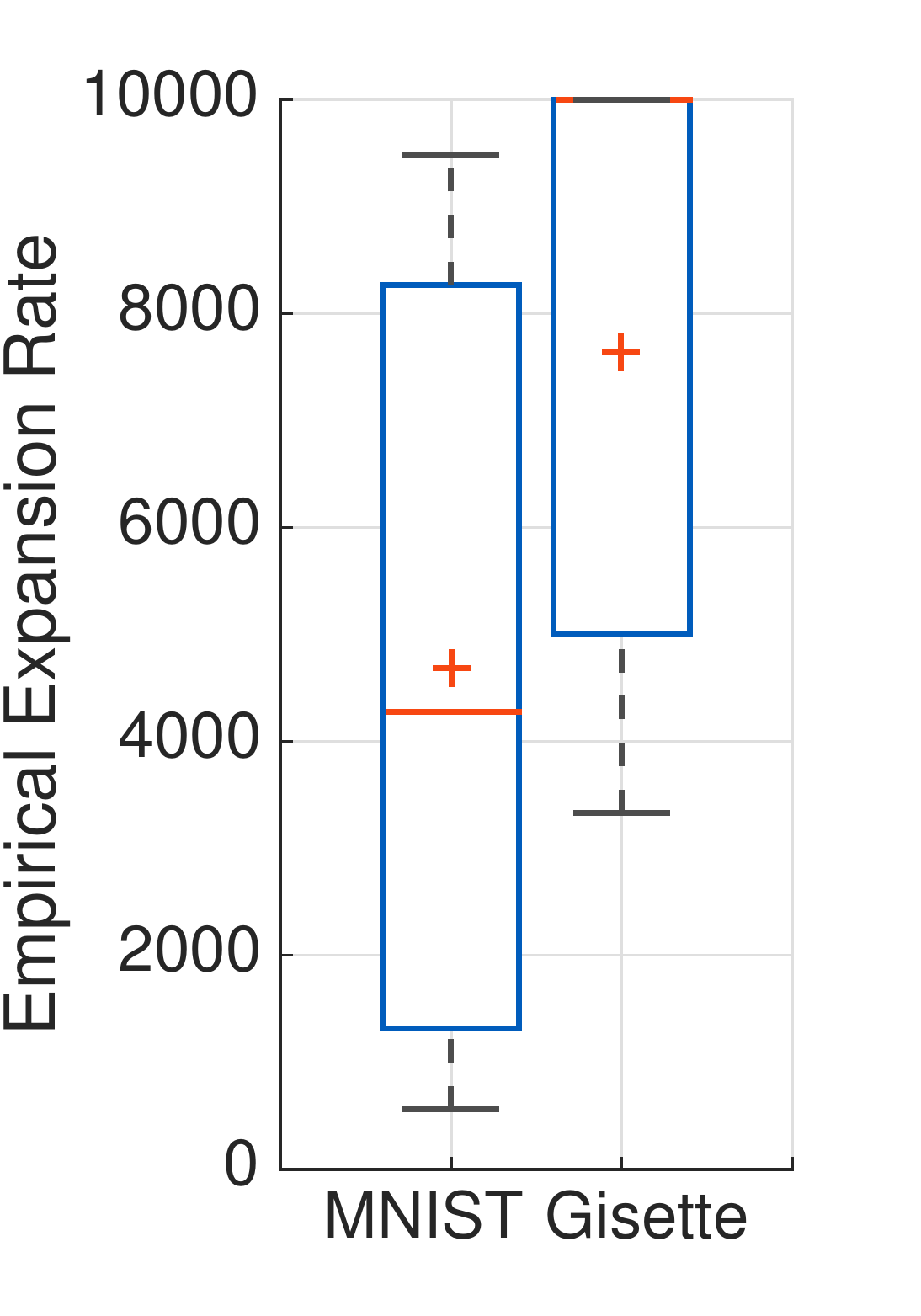}
        \label{fig:y equals x}
    \end{subfigure} 
    \begin{subfigure}[b]{0.27\textwidth}
        \centering
        \includegraphics[width=\textwidth]{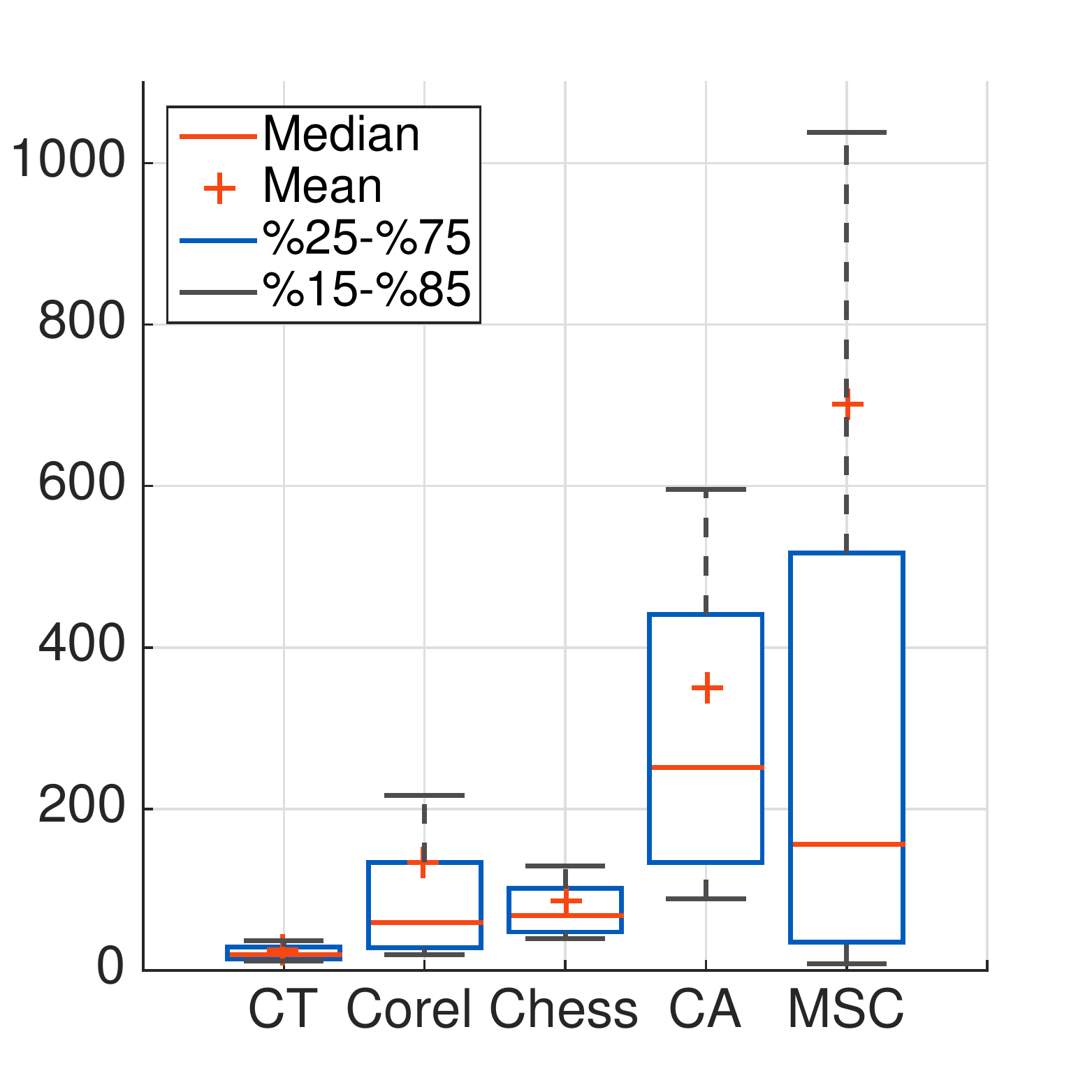}
        \label{fig:three sin x}
    \end{subfigure}
    \vspace{-.5cm}\caption{Distribution of empirical expansion rates $\tilde{c}(x)$ estimated for various datasets. Each bar represents the distribution of pointwise expansion rates for the corresponding dataset. CoAuth and CoverType datasets are abbreviated as ``CA" and ``CT" respectively. Note that the range of the expansion rates is 10 times higher for the first two datasets, thus we plotted them separately.}
    \label{fig:ExpRatesAll}
\end{figure}

For our theoretical analysis, we used the smallest possible $\tilde{c}$ for the whole dataset. However the distribution of pointwise values $\tilde c(x)$ is a more practical criterion to consider. Except for MNIST and Gisette, these values are reasonably small. Therefore, the values can justify the validity of the assumption on real datasets.
\section{CONCLUSIONS}

Comparison-based nearest neighbor search is a fundamental ingredient in machine learning algorithms in the comparison-based setting. Because triplet comparisons are expensive, we investigate the query complexity of comparison-based nearest neighbor algorithms. In particular, we study the comparison tree, which leads to a nice and simple, yet adaptive data structure. We prove that under strong conditions on the underlying metric, the comparison tree has logarithmic height, and we can bound the error of the nearest neighbor search. We also show in simulations that comparison trees perform not much worse than Euclidean data structures (albeit using much less information about the data), and perform favorably to other comparison-based methods if we take both the number of triplet comparisons and the nearest neighbor errors into account. 

There are still a number of interesting open questions to address. The conditions we use in our analysis are rather strong, and this seems to be the case for all other papers in this area as well. Can they be considerably weakened? Can we prove that our conditions will be satisfied with small constants if we sample point from a nice metric or Euclidean space? Finally, all the above work assumes that a ground truth for the triplet comparisons exists and that the answers to the triplet queries are always correct. It would be interesting to see how the query complexity of the comparison tree increases if the error in the triplets increases. 

\subsubsection*{Acknowledgements}
This work is supported by DFG (SFB 936/project Z3), Research Unit 1735, and the Institutional Strategy of the University of Tübingen (ZUK 63).

\clearpage
\bibliographystyle{plainnat}

\end{document}